\def\eqref#1{equation~\ref{#1}}
\def\1{\bm{1}}
\def\eps{{\epsilon}}
\DeclareMathAlphabet{\mathsfit}{\encodingdefault}{\sfdefault}{m}{sl}
\SetMathAlphabet{\mathsfit}{bold}{\encodingdefault}{\sfdefault}{bx}{n}
\newcommand{\R}{\mathbb{R}}
\DeclareMathOperator*{\argmin}{arg\,min}
\newcommand{\zerov}{\mathbf{0}}
\newcommand{\onev}{\mathbf{1}}
\newtheorem{theorem}{Theorem}[section]
\newtheorem{lemma}[theorem]{Lemma}
\newtheorem{definition}[theorem]{Definition}
\newtheorem{alg}[theorem]{Algorithm}
\title{Local Search Algorithms for Rank-Constrained Convex Optimization}
\author{Kyriakos Axiotis\\
MIT\\
\texttt{kaxiotis@mit.edu} \\
\And
Maxim Sviridenko \\
Yahoo! Research NYC \\
\texttt{sviri@verizonmedia.com}
}
\begin{document}

\maketitle

\begin{abstract}
We propose greedy and local search algorithms for rank-constrained convex optimization, namely 
solving
$\underset{\mathrm{rank}(A)\leq r^*}{\min}\, R(A)$
given a convex function $R:\mathbb{R}^{m\times n}\rightarrow \mathbb{R}$ and a parameter $r^*$.
These algorithms consist of repeating two steps: (a) adding a new rank-1 matrix to $A$ and 
(b) enforcing the rank constraint on $A$.
We refine and improve the theoretical analysis of~\cite{shalev2011large}, and show that
if the \emph{rank-restricted} condition number of $R$ is $\kappa$,
a solution $A$ with rank $O(r^*\cdot \min\{\kappa \log \frac{R(\mathbf{0}) - R(A^*)}{\eps}, \kappa^2\})$ 
and $R(A) \leq R(A^*) + \eps$
can be recovered, where $A^*$ is the optimal solution. 
This significantly generalizes associated results on sparse convex optimization, as well as 
rank-constrained convex optimization for smooth functions. 
We then introduce new practical variants of these algorithms that 
have superior runtime and recover better solutions in practice.
We demonstrate the versatility of these methods on 
a wide range of applications involving
matrix completion and robust principal component analysis.
\end{abstract}

\section{Introduction}
Given a real-valued convex function $R:\mathbb{R}^{m\times n}\rightarrow \mathbb{R}$ on real matrices and a parameter
$r^*\in\mathbb{N}$, the \emph{rank-constrained 
convex optimization} problem consists of finding a matrix $A\in\mathbb{R}^{m\times n}$ that minimizes $R(A)$
among all matrices of rank at most $r^*$:
\begin{align}
\underset{\mathrm{rank}(A)\leq r^*}{\min}\, R(A)
\label{eq:problem}
\end{align}
Even though $R$ is convex, the rank constraint makes this problem non-convex.
Furthermore, it is known 
that this problem is NP-hard and even
hard to approximate
(\cite{Natarajan95,FKT15}). 

In this work, we propose efficient greedy and local search algorithms for this problem.
Our contribution is twofold:
\begin{enumerate}
\item {We provide theoretical analyses that bound the rank and objective value of the solutions returned by the two algorithms
in terms of the \emph{rank-restricted condition number}, 
which is the natural generalization of the condition number for low-rank subspaces. 
The results are significantly stronger than previous known bounds for this problem.}
\item{We experimentally demonstrate that, after careful performance adjustments, the proposed general-purpose greedy and local search algorithms 
have superior performance to other methods, even for some of those that are tailored to a particular problem.
Thus, these algorithms can be considered as a general tool for rank-constrained convex optimization 
and a viable alternative to methods that use convex relaxations or alternating minimization.}
\end{enumerate}

\paragraph{The rank-restricted condition number}
Similarly to the work in sparse convex optimization, a restricted condition number quantity
has been introduced
as a reasonable assumption on $R$. %
If we let $\rho_r^+$ be the maximum smoothness bound and $\rho_r^-$ be the minimum strong convexity bound
only along rank-$r$ directions of $R$ (these are called rank-restricted smoothness and strong convexity respectively),
the rank-restricted condition number is defined as $\kappa_r = \frac{\rho_r^+}{\rho_r^-}$.
If this quantity is bounded,
one can efficiently find
a solution $A$ with $R(A) \leq R(A^*) + \epsilon$ 
and rank $r = O(r^* \cdot \kappa_{r+r^*} \frac{R(\zerov)}{\epsilon})$ using a greedy algorithm (\cite{shalev2011large}). 
However, this is not an ideal bound since the rank scales linearly with $\frac{R(\zerov)}{\epsilon}$, which can be 
particularly high in practice. Inspired by the analogous literature on sparse convex optimization by \cite{Natarajan95, SSZ10, zhang2011sparse, JTK14} and more recently
\cite{axiotis2020sparse}, one would hope to achieve a logarithmic dependence or no dependence at all on $\frac{R(\zerov)}{\epsilon}$.
In this paper we achieve this goal by providing an improved analysis showing that the greedy algorithm of~\cite{shalev2011large} 
in fact returns a matrix of rank of $r = O(r^* \cdot \kappa_{r+r^*} \log \frac{R(\zerov)}{\epsilon})$. We also provide a new local search algorithm
together with an analysis guaranteeing a rank of $r = O(r^* \cdot \kappa_{r+r^*}^2)$. Apart from significantly improving upon previous work on
rank-restricted convex optimization, these results directly generalize a lot of work in sparse convex
optimization, e.g.~\cite{Natarajan95,SSZ10,JTK14}. Our algorithms and theorem statements can be found in Section~\ref{sec:algos}.

\paragraph{Runtime improvements}
Even though the rank bound guaranteed by our theoretical analyses is adequate, the algorithm runtimes leave much to be desired. In particular,
both the greedy algorithm of~\cite{shalev2011large} and our local search algorithm 
have to solve an optimization problem in each iteration in order to find the best possible linear combination of features added so far.
Even for the case that $R(A) = \frac{1}{2} \sum\limits_{(i,j)\in \Omega} (M - A)_{ij}^2$, this requires solving a least squares problem
on $|\Omega|$ examples and $r^2$ variables. For practical implementations of these algorithms, we circumvent this issue by solving a related
optimization problem that is usually much smaller. This instead requires solving $n$ least squares problems
with \emph{total} number of examples $|\Omega|$, each on $r$ variables. This not only reduces the size of the problem by a factor of $r$,
but also allows for a straightforward distributed implementation. Interestingly, our theoretical analyses still hold for these variants.
We propose an additional heuristic that reduces the runtime even more drastically, which is to only run a few (less than 10) iterations of the algorithm used for solving
the inner optimization problem. Experimental results show that this modification not only does not significantly worsen results, but for machine learning applications
also acts as a regularization method that can dramatically improve generalization.
These matters, as well as additional improvements for making the local search algorithm more practical,
are addressed in Section~\ref{sec:runtime_adjustments}.

\paragraph{Roadmap}
In Section~\ref{sec:algos}, we provide the descriptions and theoretical results for the algorithms used, along with several modifications
to boost performance.
In Section~\ref{sec:optimization}, we evaluate the proposed greedy and local search algorithms on optimization problems like robust PCA.
Then, in Section~\ref{sec:ML} we evaluate their generalization performance in machine learning problems like matrix completion.

 \section{Algorithms \& Theoretical Guarantees}
\label{sec:algos}

In Sections~\ref{sec:greedy} and~\ref{sec:local_search} we state and provide theoretical performance guarantees
for the basic greedy and local search algorithms respectively.
Then in Section~\ref{sec:runtime_adjustments} we state the algorithmic adjustments that we propose in order to make
the algorithms efficient in terms of runtime and generalization performance.
A discussion regarding the tightness
of the theoretical analysis is deferred to Appendix~\ref{sec:appendix_tightness}.

When the dimension is clear from context, we will denote the all-ones vector by $\onev$, and 
the vector that is $0$ everywhere and $1$ at position $i$ by $\onev_i$.
Given a matrix $A$, we denote by $\mathrm{im}(A)$ its column span.
One notion that we will find useful is that of \emph{singular value thresholding}. 
More specifically, given a rank-$k$ matrix
$A\in\mathbb{R}^{m\times n}$ with SVD $\sum\limits_{i=1}^k \sigma_i u^i v^{i\top}$
such that $\sigma_1 \geq \dots\geq \sigma_k$, as well as an integer parameter $r \geq 1$, 
we define $H_r(A) = \sum\limits_{i=1}^r \sigma_i u^i v^{i\top}$ to be the operator that truncates 
to the $r$ highest singular values of $A$.

\algdef{SE}[DOWHILE]{Do}{doWhile}{\algorithmicdo}[1]{\algorithmicwhile\ #1}%
\subsection{Greedy}
\label{sec:greedy}

Algorithm~\ref{alg:greedy} (Greedy) was first introduced in~\cite{shalev2011large} as the GECO algorithm. It
works by iteratively adding a rank-1 matrix to the current solution. This matrix is chosen as the rank-1 matrix that best approximates the gradient, i.e. the pair of singular vectors corresponding to the maximum singular value of the gradient. 
In each iteration, an additional procedure is run to optimize the combination of previously chosen singular vectors.

In~\cite{shalev2011large} guarantee on the rank of the solution returned by the algorithm is $r^* \kappa_{r+r^*} \frac{R(\zerov)}{\epsilon}$.
The main bottleneck in order to improve on the $\frac{R(\zerov)}{\epsilon}$ factor
is the fact that the analysis is done in terms of the squared nuclear norm of the optimal solution. 
As the worst-case discrepancy between the squared nuclear norm and the rank is $R(\zerov)/\epsilon$, their bounds inherit this factor. 
Our analysis works directly with the rank, in the spirit of sparse optimization results (e.g. \cite{shalev2011large,JTK14,axiotis2020sparse}).
A challenge compared to these works is the need for a suitable notion of ``intersection'' between two sets of vectors. 
The main technical contribution of this work is to show that the orthogonal projection of one set of vectors into the span of the other is such a notion, and, based on this, to define a 
decomposition of the optimal solution that is used in the analysis.

\begin{algorithm} %
\caption{Greedy}\label{alg:greedy}
\begin{algorithmic}[1]
\Procedure{Greedy}{$r\in\mathbb{N}$ : target rank}
	\State function to be minimized $R:\mathbb{R}^{m\times n}\rightarrow\mathbb{R}$
	\State $U \in \mathbb{R}^{m\times 0}$ \Comment{Initially rank is zero}
	\State $V \in \mathbb{R}^{n\times 0}$
	\For {$t=0\dots r-1$}
	\State $\sigma u v^\top \leftarrow H_1(\nabla R(UV^\top))$ 
	\Comment{Max singular value $\sigma$ and corresp. singular vectors $u,v$}
	\State $U \leftarrow \begin{pmatrix}U & u\end{pmatrix}$ \Comment{Append new vectors as columns}
	\State $V \leftarrow \begin{pmatrix}V & v\end{pmatrix}$
	\State $U, V \leftarrow \textsc{Optimize}(U, V)$
	\EndFor
	\State \Return\, $UV^\top$
\EndProcedure
\Procedure{Optimize}{$U\in\mathbb{R}^{m\times r}, V\in\mathbb{R}^{n\times r}$}
\State $X \leftarrow \underset{X\in\mathbb{R}^{r\times r}}{\argmin}\, R(UXV^\top)$
\State \Return\, $UX, V$ 
\EndProcedure
\end{algorithmic}
\end{algorithm}

\begin{theorem}[Algorithm~\ref{alg:greedy} (greedy) analysis]
\label{thm:greedy}
Let $A^*$ be any fixed optimal solution of (\ref{eq:problem}) for some function $R$ and rank
bound $r^*$,
and let $\epsilon > 0$ be an error parameter.
For any integer $r \geq 2r^* \cdot \kappa_{r+r^*} \log \frac{R(\mathbf{0}) - R(A^*)}{\epsilon}$,
if we let $A = \textsc{Greedy}(r)$ be the solution returned by Algorithm~\ref{alg:greedy}, then
$ R(A) \leq R(A^*) + \epsilon $. The number of iterations is $r$.
\end{theorem}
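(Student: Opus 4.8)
The plan is to establish a per-iteration geometric decrease of the optimality gap $\delta_t := R(A_t) - R(A^*)$, where $A_t = U_tV_t^\top$ denotes the iterate after $t$ passes through the loop (so $A_0 = \zerov$). Concretely, I aim to show that each iteration satisfies $\delta_{t+1} \le \left(1 - \frac{1}{2r^*\kappa_{r+r^*}}\right)\delta_t$. Iterating this from $A_0$ gives $\delta_r \le \left(1-\frac{1}{2r^*\kappa_{r+r^*}}\right)^r \delta_0 \le e^{-r/(2r^*\kappa_{r+r^*})}\,(R(\zerov)-R(A^*))$, and the stated lower bound on $r$ is exactly what is needed to force the right-hand side below $\epsilon$. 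So the whole theorem reduces to the one-step progress inequality.

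To prove the one-step bound, write $G := \nabla R(A_t)$ and $\sigma := \sigma_1(G) = \|G\|_{\mathrm{op}}$, the quantity extracted by $H_1$ in the loop. Two ingredients drive the argument. First, rank-restricted smoothness along the rank-$1$ direction $uv^\top$ gives, after optimizing the step size, a guaranteed decrease $R(A_t - \tfrac{\sigma}{\rho^+}uv^\top) \le R(A_t) - \frac{\sigma^2}{2\rho^+}$; since $A_t - \tfrac{\sigma}{\rho^+}uv^\top$ lies in the span of the enlarged factors $U_{t+1},V_{t+1}$, the \textsc{Optimize} step can only do better, so $\delta_{t+1}\le \delta_t - \frac{\sigma^2}{2\rho^+}$. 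Second, first-order optimality of the inner problem $X_t = \argmin_X R(U_tXV_t^\top)$ yields $U_t^\top G V_t = 0$; equivalently, writing $P_U,P_V$ for the orthogonal projections onto $\mathrm{im}(U_t),\mathrm{im}(V_t)$, we have $P_U G P_V = 0$, hence $\langle G, A_t\rangle = 0$ and $\langle G, P_U Z P_V\rangle = 0$ for every $Z$.

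The heart of the proof is a lower bound $\sigma^2 \ge \frac{\rho^-}{r^*}\,\delta_t$, which combined with the decrease above gives exactly the claimed contraction since $\kappa_{r+r^*} = \rho^+/\rho^-$. To get it I would test $G$ against the matrix $M := A^* - P_U A^* P_V$, the part of the optimum not already captured by the current row/column spaces; this is the intended ``intersection''-based decomposition. Then: (i) because $P_U G P_V = 0$, we have $\langle G, M\rangle = \langle G, A^*\rangle = \langle G, A^* - A_t\rangle =: -a$, so $a = \langle G, A_t - A^*\rangle$; (ii) $P_U A^* P_V$ is the Frobenius-orthogonal projection of $A^*$ onto the subspace $\{P_U Z P_V\}$, which contains $A_t$, so $\|M\|_F \le \|A^* - A_t\|_F$; and (iii) $M = P_U^\perp A^* + P_U A^* P_V^\perp$ has rank at most $2r^*$, whence $\|M\|_* \le \sqrt{2r^*}\,\|M\|_F \le \sqrt{2r^*}\,\|A^*-A_t\|_F$. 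By operator/nuclear-norm duality, $\sigma = \|G\|_{\mathrm{op}} \ge |\langle G, M\rangle|/\|M\|_* = a/\|M\|_* \ge a/(\sqrt{2r^*}\,\|A^*-A_t\|_F)$. Finally, rank-restricted strong convexity along $A^* - A_t$ gives $a \ge \delta_t + \frac{\rho^-}{2}\|A^*-A_t\|_F^2$, and AM--GM turns this into $a \ge \sqrt{2\rho^-\delta_t}\,\|A^*-A_t\|_F$; substituting yields $\sigma \ge \sqrt{\rho^-\delta_t/r^*}$, the desired bound.

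The main obstacle is step (iii) — making the overlap of the current subspaces with $A^*$ precise so that the residual test matrix $M$ has rank governed by $r^*$ (not by the growing rank $t$ of $A_t$) while still not increasing the Frobenius norm beyond $\|A^*-A_t\|_F$. This is precisely where the orthogonal-projection notion of intersection does the work, and it is what lets the analysis track the rank directly rather than the squared nuclear norm of the optimum. One should also keep careful bookkeeping on which rank level the smoothness and strong-convexity constants apply to: the smoothness step moves between matrices of rank $\le t+1 \le r$, while the strong-convexity step involves $A^*-A_t$ of rank $\le r + r^*$, so all constants are taken at level $r+r^*$, giving $\kappa_{r+r^*}$.
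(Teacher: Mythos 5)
Your proof is correct, and it shares the paper's overall skeleton: per-iteration progress $\frac{\sigma^2}{2\rho^+}$ from smoothness along the top singular pair (with \textsc{Optimize} only improving matters), the inner-optimality condition $U^\top \nabla R(A_t) V = \zerov$ (Lemma~\ref{lem:gradient_zero}) to annihilate the in-span part of $A^*$, and rank-restricted strong convexity to turn a spectral-norm lower bound on the gradient into a geometric contraction; indeed your test matrix $M = A^* - P_U A^* P_V$ is exactly the paper's residual $U^1V^{2\top} + U^2V^{*\top}$, since $P_U A^* P_V = U^1V^{1\top}$. Where you genuinely diverge is in how the key inequality $\|\nabla R(A_t)\|_2^2 \geq \frac{\rho^-_{r+r^*}}{r^*}\left(R(A_t)-R(A^*)\right)$ is extracted. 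The paper splits the residual into two rank-$\leq r^*$ pieces, drops the in-span Frobenius term after an orthogonal splitting (Lemma~\ref{lem:frobenius_sum}), lower-bounds the strong-convexity expression by twice a minimum over rank-$r^*$ matrices, and evaluates that minimum exactly via the variational characterization of singular value thresholding (Lemma~\ref{lem:frobenius_rank_optimization}, proved with Weyl's inequality). You instead keep $M$ whole as a single rank-$\leq 2r^*$ matrix and use three elementary facts: non-expansiveness of the Frobenius-orthogonal projection $Z \mapsto P_U Z P_V$ (giving $\|M\|_F \leq \|A^*-A_t\|_F$, a step the paper does not use), operator/nuclear norm duality with $\|M\|_* \leq \sqrt{2r^*}\,\|M\|_F$, and AM--GM applied to the strong convexity inequality. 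This avoids Weyl's inequality and the thresholding lemma entirely and lands on identical constants, so it proves the theorem exactly as stated; it is arguably the more elementary derivation. Three small points of bookkeeping: (1) smoothness is applied along a rank-$1$ direction, so only $\rho_1^+$ is needed, and monotonicity $\rho_1^+ \leq \rho^+_{r+r^*}$ is what justifies stating everything at level $r+r^*$; (2) you should record that $a = \langle \nabla R(A_t), A_t - A^*\rangle \geq \delta_t \geq 0$ (by convexity) before dividing by $\|M\|_*$, and dispose of the trivial case $M = \zerov$; (3) be aware that your projection step (ii) discards precisely the term $\frac{\rho^-_{r+r^*}}{2}\|U'V'^{\top}\|_F^2$ that the paper's local search analysis (Theorem~\ref{thm:local_search}) must retain to control $\sigma_{\min}(A_{t-1})$, so your streamlined route, while perfectly adequate for the greedy theorem, does not extend verbatim to the local search proof.
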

The proof of Theorem~\ref{thm:greedy} can be found in Appendix~\ref{proof_thm:greedy}.

\subsection{Local Search}
\label{sec:local_search}

One drawback of Algorithm 1 is that it increases the rank in each iteration. Algorithm 2 is a modification of Algorithm 1, in which the rank is truncated in each iteration. The advantage of Algorithm 2 compared to Algorithm 1 is that it is able to make progress without increasing the rank of A, while Algorithm 1 necessarily increases the rank in each iteration. More specifically, because of the greedy nature of Algorithm 1, some rank-1 components that have been added to A might become obsolete or have reduced benefit after a number of iterations. Algorithm 2 is able to identify such candidates and remove them, thus allowing it to continue making progress.

\begin{algorithm} %
\caption{Local Search}\label{alg:local_search}
\begin{algorithmic}[1]
\Procedure{Local\_Search}{$r\in\mathbb{N}$ : target rank}
	\State function to be minimized $R:\mathbb{R}^{m\times n}\rightarrow\mathbb{R}$
	\State $U \leftarrow \zerov_{m\times r}$ \Comment{Initialize with all-zero solution}
	\State $V \leftarrow \zerov_{n\times r}$ 
	\For {$t=0\dots L-1$} \Comment{Run for $L$ iterations}
	\State $\sigma u v^\top \leftarrow H_1(\nabla R(UV^\top))$ 
	\Comment{Max singular value $\sigma$ and corresp. singular vectors $u,v$}
	\State $U, V \leftarrow \textsc{Truncate}(U, V)$ \Comment{Reduce rank of $UV^\top$ by one}
	\State $U \leftarrow \begin{pmatrix}U & u\end{pmatrix}$ \Comment{Append new vectors as columns}
	\State $V \leftarrow \begin{pmatrix}V & v\end{pmatrix}$
	\State $U, V \leftarrow \textsc{Optimize}(U, V)$
	\EndFor
	\State \Return\, $UV^\top$
\EndProcedure
\Procedure{Truncate}{$U\in\mathbb{R}^{m\times r}, V\in\mathbb{R}^{n\times r}$}
\State $U\Sigma V^\top \leftarrow \mathrm{SVD}(H_{r-1}(UV^\top))$
\Comment{Keep all but minimum singular value}
\State \Return $U\Sigma,  V$
\EndProcedure
\end{algorithmic}
\end{algorithm}
\begin{theorem}[Algorithm~\ref{alg:local_search} (local search) analysis]
Let $A^*$ be any fixed optimal solution of (\ref{eq:problem}) for some function $R$ and rank
bound $r^*$,
and let $\epsilon > 0$ be an error parameter.
For any integer $r \geq r^* \cdot (1 + 8 \kappa_{r+r^*}^2)$,
if we let $A = \textsc{Local\_Search}(r)$ be the solution returned by Algorithm~\ref{alg:local_search}, then
$ R(A) \leq R(A^*) + \epsilon $.
\label{thm:local_search}
The number of iterations is $O\left(r^* \kappa_{r+r^*} \log \frac{R(\zerov) - R(A^*)}{\eps} \right)$.
\end{theorem}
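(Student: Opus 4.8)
The plan is to show that every iteration contracts the optimality gap $\Delta_t := R(A_t) - R(A^*)$ by a fixed multiplicative factor $1 - \Omega\!\big(1/(r^*\kappa_{r+r^*})\big)$, so that $\Delta_t \le \epsilon$ after $O\!\big(r^*\kappa_{r+r^*}\log\frac{R(\zerov)-R(A^*)}{\epsilon}\big)$ iterations; since $A_0=\zerov$ this is exactly the claimed count. Write $A_t=UV^\top$ for the iterate at the start of iteration $t$ and assume w.l.o.g.\ that $U,V$ have orthonormal columns. The structural fact that drives everything is that after the \textsc{Optimize} call, $A_t$ minimizes $R$ over the bilinear subspace $\{UXV^\top:X\in\R^{r\times r}\}$, so the first-order condition $U^\top\nabla R(A_t)V=0$ holds; in particular $G_t:=\nabla R(A_t)$ is Frobenius-orthogonal to every matrix whose column and row spaces lie in $\mathrm{im}(U)$ and $\mathrm{im}(V)$.

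First I would establish one per-step inequality by expanding $R$ around $A_t$ — rather than around the truncated point, which sidesteps the fact that $u,v$ are the top singular vectors of $G_t$ and not of $\nabla R(H_{r-1}(A_t))$. The matrix built just before \textsc{Optimize} is $A_t-\sigma_{\min}(A_t)\,u_rv_r^\top+\eta\,uv^\top$, where $u_rv_r^\top$ is the discarded bottom singular component and $uv^\top=H_1(G_t)$; since \textsc{Optimize} can only improve on any fixed $\eta$, rank-restricted smoothness (the displacement has rank $\le 2$, so $A_{t+1}$ has rank $\le r+r^*$) gives
\begin{equation}
R(A_{t+1}) \le R(A_t) + \langle G_t,\, -\sigma_{\min}(A_t)\,u_rv_r^\top+\eta\,uv^\top\rangle + \tfrac{\rho^+_{r+r^*}}{2}\,\big\|{-\sigma_{\min}(A_t)\,u_rv_r^\top+\eta\,uv^\top}\big\|_F^2.
\end{equation}
The truncation term $\langle G_t,u_rv_r^\top\rangle$ vanishes by the first-order condition, while $\langle G_t,uv^\top\rangle=\sigma_1(G_t)$; optimizing over $\eta$ (cross terms only affect constants) yields the clean form
\begin{equation}
R(A_t)-R(A_{t+1}) \ \ge\ \frac{\sigma_1(G_t)^2}{2\rho^+_{r+r^*}} \ -\ \frac{\rho^+_{r+r^*}}{2}\,\sigma_{\min}(A_t)^2,
\end{equation}
a greedy gain minus a truncation loss.

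The heart of the argument is to lower bound the gain and upper bound the loss in terms of $\Delta_t$. I would decompose $A^*=A^*_\parallel+A^*_\perp$, where $A^*_\parallel$ is the Frobenius projection of $A^*$ onto the bilinear subspace and $A^*_\perp=A^*-A^*_\parallel$; this is the ``intersection'' decomposition, whose key features are that $A^*_\perp$ has rank $O(r^*)$ and that $\langle G_t,A^*_\parallel\rangle=0$. Convexity then gives $\Delta_t\le\langle G_t,A_t-A^*\rangle=-\langle G_t,A^*_\perp\rangle\le\sigma_1(G_t)\sqrt{O(r^*)}\,\|A^*_\perp\|_F$, and since $A^*_\perp$ is orthogonal to the optimizable subspace and $A_t$ lies in it, $\|A^*_\perp\|_F\le\|A_t-A^*\|_F$. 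Feeding these into rank-restricted strong convexity, $\Delta_t+\tfrac{\rho^-_{r+r^*}}{2}\|A_t-A^*\|_F^2\le-\langle G_t,A^*_\perp\rangle$, and reading the result as a quadratic inequality in $\|A_t-A^*\|_F$ produces simultaneously
\begin{equation}
\sigma_1(G_t)^2 \ \ge\ \frac{\rho^-_{r+r^*}\,\Delta_t}{O(r^*)}, \qquad \|A_t-A^*\|_F^2 \ \le\ \frac{O(r^*)\,\sigma_1(G_t)^2}{(\rho^-_{r+r^*})^2}.
\end{equation}
The appearance of $r^*$ (rather than $r+r^*$) is exactly what the bounded rank of $A^*_\perp$ buys, and is the source of the improvement over the naive bound. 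For the loss, $A_t$ has at least $r-r^*$ singular directions outside the row/column spaces of $A^*$, so $\sigma_{\min}(A_t)^2\le\|A_t-A^*\|_F^2/(r-r^*)$; combined with the second bound above, the truncation loss is at most a $\frac{O(\kappa_{r+r^*}^2 r^*)}{r-r^*}$ fraction of the greedy gain.

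Substituting into the per-step inequality, the net progress is at least $\frac{\sigma_1(G_t)^2}{2\rho^+_{r+r^*}}\big(1-\frac{O(\kappa_{r+r^*}^2 r^*)}{r-r^*}\big)$, which under the hypothesis $r\ge r^*(1+8\kappa_{r+r^*}^2)$ is at least half the greedy gain, hence $\Omega(\Delta_t/(r^*\kappa_{r+r^*}))$ — the promised geometric contraction. I expect the main obstacle to be the linear-algebraic decomposition: making precise the ``intersection'' of the current subspace with $\mathrm{im}(A^*)$ so that $A^*_\perp$ provably has rank $O(r^*)$, is Frobenius-orthogonal to the whole optimizable subspace, and so that $A_t$ genuinely has $\ge r-r^*$ singular directions outside $A^*$'s spans. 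These are precisely the ingredients that replace the $r+r^*$ of the crude estimate by $r^*$ and that control $\sigma_{\min}(A_t)$; the remaining steps are routine once the constants in the two displayed bounds are tracked.
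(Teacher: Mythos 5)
Your proposal follows the paper's own proof in all essentials: your Frobenius projection $A^*_\parallel$ onto the bilinear subspace $\{UXV^\top\}$ is exactly the paper's $U^1V^{1\top}$ (with $A^*_\perp = U^1V^{2\top}+U^2V^{*\top}$, rank $\le 2r^*$), the first-order condition $U^\top\nabla R(A_t)V=0$ killing both the truncation term and $\langle G_t, A^*_\parallel\rangle$ is the paper's Lemma on gradient orthogonality, the bound $\sigma_{\min}^2(A_t)\le \|A_t - A^*\|_F^2/(r-r^*)$ is the paper's Weyl-inequality step (the paper applies it to $A_t - A^*_\parallel$ rather than $A_t-A^*$; both work since both subtracted matrices have rank $\le r^*$), and the geometric contraction and iteration count are identical.

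The one place where your bookkeeping genuinely diverges, and where it falls short of the stated theorem, is how you process the strong-convexity inequality. You split the quadratic inequality $\Delta_t + \tfrac{\rho^-}{2}z^2 \le \sigma_1(G_t)\sqrt{2r^*}\,z$ (with $z=\|A_t-A^*\|_F$) into \emph{two separate} consequences, $\sigma_1^2 \ge \rho^-\Delta_t/r^*$ and $z^2 \le 8r^*\sigma_1^2/(\rho^-)^2$, and then combine them with the loss bound. Tracking constants: the truncation loss is at most $\tfrac{\rho^+}{2}\cdot\tfrac{8r^*\sigma_1^2}{(r-r^*)(\rho^-)^2}$, so loss/gain $\le 8\kappa^2 r^*/(r-r^*)$ even with your (optimistic) per-step gain $\sigma_1^2/(2\rho^+)$; under the hypothesis $r-r^*\ge 8\kappa^2 r^*$ this only shows progress $\ge 0$, which is vacuous, and with the honest per-step constants ($\sigma_1^2/(4\rho_2^+) - \rho_2^+\sigma_{\min}^2$, forced by the cross term $\langle u_rv_r^\top, uv^\top\rangle\ne 0$) you would need $r-r^*\gtrsim 64\,\kappa^2 r^*$. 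The paper avoids this loss by never splitting: it keeps the orthogonal decomposition $\|A_t-A^*\|_F^2 = \|A_t-A^*_\parallel\|_F^2 + \|A^*_\perp\|_F^2$ inside the strong-convexity inequality, lower-bounds the first summand by $(r-r^*)\sigma_{\min}^2$ via Weyl, and completes the square \emph{only} over the perpendicular part, yielding the single combined inequality
\begin{equation*}
\sigma_1(G_t)^2 \;\ge\; \frac{\rho^-_{r+r^*}}{r^*}\,\Delta_t \;+\; \frac{(\rho^-_{r+r^*})^2 (r-r^*)}{2r^*}\,\sigma_{\min}^2(A_t)\,,
\end{equation*}
which feeds into the smoothness bound and gives exactly the constant $8$. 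So your plan proves the theorem with a worse constant; to recover the stated $r \ge r^*(1+8\kappa_{r+r^*}^2)$ you should replace the two-way split by this combined estimate, which is available within your own framework since $A^*_\perp$ is Frobenius-orthogonal to the whole bilinear subspace.
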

The proof of Theorem~\ref{thm:local_search} can be found in Appendix~\ref{proof_thm:local_search}.

\subsection{Algorithmic adjustments}
\label{sec:runtime_adjustments}

\paragraph{Inner optimization problem}
The inner optimization problem that is used in both greedy and local search is:
\begin{align}
\underset{X\in\mathbb{R}^{r\times r}}{\min}\, R(UXV^\top)\,.
\label{eq:inner_optimization}
\end{align}
It essentially finds the choice of matrices $U'$ and $V'$, with columns in the column span of $U$ and $V$ respectively,
that minimizes $R(U'V^{'\top})$. We, however, consider the following problem instead:
\begin{align}
\underset{V\in\mathbb{R}^{n\times r}}{\min}\, R(UV^\top)\,.
\label{eq:inner_optimization2}
\end{align}
Note that the solution recovered from (\ref{eq:inner_optimization2}) will never have worse objective value
than the one recovered from (\ref{eq:inner_optimization}), and that nothing in the analysis of the algorithms breaks. %
 Importantly, (\ref{eq:inner_optimization2}) can usually be solved much more efficiently than (\ref{eq:inner_optimization}).
As an example, consider the following objective that appears in matrix completion: $R(A) = \frac{1}{2} \sum\limits_{(i,j)\in\Omega} (M - A)_{ij}^2$
for some $\Omega\subseteq [m]\times [n]$. If we let $\Pi_\Omega(\cdot)$ be an operator that zeroes out all positions in the matrix
that are not in $\Omega$, we have $\nabla R(A) = -\Pi_{\Omega}(M-A)$. The optimality condition of (\ref{eq:inner_optimization})
now is 
$U^\top \Pi_{\Omega}(M-UXV^\top) V = \zerov$
and that of (\ref{eq:inner_optimization2}) is 
$U^\top \Pi_{\Omega}(M-UV^\top) = \zerov$. The former corresponds to a least squares linear regression 
problem with $|\Omega|$ examples and $r^2$ variables, while the latter can be decomposed into $n$ independent systems
$U^\top \left(\sum\limits_{i: (i,j)\in \Omega} \onev_i\onev_i^\top \right) U V^j = U^\top \Pi_{\Omega} \left(M\onev_j\right)$,
where the variable is $V^j$ which is the $j$-th column of $V$. The $j$-th of these systems corresponds to a least
squares linear regression problem with $\left|\{i\ :\ (i,j)\in\Omega\}\right|$ examples and $r$ variables. Note that the total number
of examples in all systems is $\sum\limits_{j\in[n]} \left|\{i\ :\ (i,j)\in\Omega\}\right| = |\Omega|$.
The choice of $V$ here as the variable to be optimized is arbitrary. In particular, as can be seen in Algorithm~\ref{alg:inner_optimization_fast},
in practice we alternate between optimizing $U$ and $V$ in each iteration. It is worthy of mention that
the \textsc{Optimize\_Fast} procedure is basically the same as one step of the popular alternating minimization procedure for solving low-rank problems.
As a matter of fact, when our proposed algorithms are viewed from this lens, they can be seen as alternating minimization
interleaved with rank-1 insertion and/or removal steps.

\begin{algorithm} %
\caption{Fast inner Optimization}\label{alg:inner_optimization_fast}
\begin{algorithmic}[1]
\Procedure{Optimize\_Fast}{$U\in\mathbb{R}^{m\times r}, V\in\mathbb{R}^{n\times r}, t\in\mathbb{N} : \text{iteration index of algorithm}$}
\If {$t \; \mathrm{mod}\; 2 = 0$}
\State $X \leftarrow \underset{X\in\mathbb{R}^{m\times r}}{\argmin}\, R(XV^\top)$
\State \Return\, $X, V$ 
\Else
\State $X \leftarrow \underset{X\in\mathbb{R}^{n\times r}}{\argmin}\, R(UX^\top)$
\State \Return\, $U, X$ 
\EndIf
\EndProcedure
\end{algorithmic}
\end{algorithm}

\paragraph{Singular value decomposition}

As modern methods for computing the top entries of a singular value decomposition scale very well even for large sparse matrices~(\cite{martinsson2011randomized,szlam2014implementation,fbpca}), the ``insertion'' step of greedy and local search, in which the top entry of the SVD of the gradient is determined, is quite fast in practice.
However, these methods are not suited for computing the \emph{smallest} singular values and corresponding singular vectors, a step required for the local search algorithm that we
propose. Therefore, in our practical implementations we opt to perfom the alternative step of directly removing one pair of vectors from the representation
$UV^\top$. 
A simple approach is to go over all $r$ possible removals and pick the one that increases the objective by the least amount. A variation of this approach
has been used by~\cite{shalev2011large}.
However, a much faster approach is to just pick the pair of vectors $U\onev_i$, $V\onev_i$ that minimizes $\|U\onev_i\|_2\|V\onev_i\|_2$. This is the
approach that we use,
as can be seen in Algorithm~\ref{alg:rank_reduction_fast}.

\begin{algorithm} %
\caption{Fast rank reduction}\label{alg:rank_reduction_fast}
\begin{algorithmic}[1]
\Procedure{Truncate\_Fast}{$U\in\mathbb{R}^{m\times r}, V\in\mathbb{R}^{n\times r}$}
\State $i \leftarrow \underset{i\in[r]}{\argmin}\, \|U\onev_i\|_2 \|V\onev_i\|_2$
\State \Return\, $\begin{pmatrix}U_{[m],[1,i-1]} & U_{[m],[i+1,r]}\end{pmatrix}, \begin{pmatrix}V_{[n],[1,i-1]} & V_{[n],[i+1,r]}\end{pmatrix}$
\Comment{Remove column $i$}
\EndProcedure
\end{algorithmic}
\end{algorithm}

\ifx 0
\subsubsection*{Local search initialization}

As we have seen in Algorithm~\ref{alg:local_search}, the local search algorithm can be initialized by an arbitrary solution.
In practice, however, we can do with a much better initialization. We choose to initialize by running Algorithm~\ref{alg:greedy}
and using the returned value as the initial solution for Algorithm~\ref{alg:local_search}. This combines the best of both algorithms
and has good performance in practice.
\fi

After the previous discussion, we are ready to state the fast versions of Algorithm~\ref{alg:greedy} and Algorithm~\ref{alg:local_search}
that we use for our experiments. These are Algorithm~\ref{alg:fast_greedy} and Algorithm~\ref{alg:fast_local_search}.
Notice that we initialize Algorithm~\ref{alg:fast_local_search} with the solution of Algorithm~\ref{alg:fast_greedy}
and we run it until the value of $R(\cdot)$ stops decreasing rather than for a fixed number of iterations.
\begin{alg}[Fast Greedy]
The Fast Greedy algorithm is defined identically as Algorithm~\ref{alg:greedy}, with the only difference
that it uses the \textsc{Optimize\_Fast} routine as opposed to the \textsc{Optimize} routine.
\label{alg:fast_greedy}
\end{alg}
\ifx 0
\begin{algorithm} %
\caption{Fast Greedy}\label{alg:fast_greedy}
\begin{algorithmic}[1]
\Procedure{Fast\_Greedy}{$r\in\mathbb{N}$ : target rank}
	\State function to be minimized $R:\mathbb{R}^{m\times n}\rightarrow\mathbb{R}$
	\State $U \in \mathbb{R}^{m\times 0}$ \Comment{Initially rank is zero}
	\State $V \in \mathbb{R}^{n\times 0}$
	\For {$t=0\dots r-1$}
	\State $\sigma u v^\top \leftarrow H_1(\nabla R(UV^\top))$ \label{step:gradient_greedy}
	\Comment{Max singular value $\sigma$ and corresp. singular vectors $u,v$}
	\State $U \leftarrow \begin{pmatrix}U & u\end{pmatrix}$ \Comment{Append new vectors as columns}
	\State $V \leftarrow \begin{pmatrix}V & v\end{pmatrix}$
	\State $U, V \leftarrow \textsc{Optimize\_Fast}(U, V, t)$
	\EndFor
	\State \Return\, $UV^\top$
\EndProcedure
\end{algorithmic}
\end{algorithm}
\fi

\begin{algorithm} %
\caption{Fast Local Search}\label{alg:fast_local_search}
\begin{algorithmic}[1]
\Procedure{Fast\_Local\_Search}{$r\in\mathbb{N}$ : target rank}
	\State function to be minimized $R:\mathbb{R}^{m\times n}\rightarrow\mathbb{R}$
	\State $U, V \leftarrow \text{solution returned by \textsc{Fast\_Greedy}($r$)}$
	\Do
	\State $U_{\mathrm{prev}}, V_{\mathrm{prev}} \leftarrow U, V$
	\State $\sigma u v^\top \leftarrow H_1(\nabla R(UV^\top))$  \label{step:gradient_local_search}
	\Comment{Max singular value $\sigma$ and corresp. singular vectors $u,v$}
	\State $U, V \leftarrow \textsc{Truncate\_Fast}(U, V)$ \Comment{Reduce rank of $UV^\top$ by one}
	\State $U \leftarrow \begin{pmatrix}U & u\end{pmatrix}$ \Comment{Append new vectors as columns}
	\State $V \leftarrow \begin{pmatrix}V & v\end{pmatrix}$
	\State $U, V \leftarrow \textsc{Optimize\_Fast}(U, V, t)$
	\doWhile {$R(UV^\top) < R(U_{\mathrm{prev}} V_{\mathrm{prev}}^\top)$}
	\State \Return\, $U_{\mathrm{prev}}V_{\mathrm{prev}}^\top$
\EndProcedure
\end{algorithmic}
\end{algorithm}

\section{Optimization Applications}
\label{sec:optimization}

An immediate application of the above algorithms is in the problem of \emph{low rank matrix recovery}.
Given any convex distance measure between matrices $d:\mathbb{R}_{m\times n}\times \mathbb{R}_{m\times n}\rightarrow \mathbb{R}_{\geq 0}$,
the goal is to find a low-rank matrix $A$ that matches a target matrix $M$ as well as possible in terms of $d$:
$\underset{\mathrm{rank}(A)\leq r^*}{\min} d(M, A)$ 
This problem captures a lot of different applications, some of which we go over in the following sections.

\subsection{Low-rank approximation on observed set}
\label{sec:svd_missing_data}

A particular case of interest is when $d(M, A)$ is the Frobenius norm of $M-A$, but only applied to entries
belonging to some set $\Omega$. In other words, 
$ d(M,A) = \frac{1}{2} \|\Pi_{\Omega}(M-A)\|_F^2 \,.$
We have compared our Fast Greedy and Fast Local Search algorithms with the SoftImpute algorithm
of~\cite{mazumder2010spectral} as implemented by~\cite{fancyimpute},
on the same experiments as in~\cite{mazumder2010spectral}. 
We have solved the inner optimization problem required by our algorithms by the LSQR algorithm~\cite{lsqr}.
More specifically, $M=UV^\top+\eta\in\mathbb{R}^{100\times 100}$, where $\eta$ is 
some noise vector. We let every entry of $U,V,\eta$ be i.i.d. normal with mean $0$
and the entries of $\Omega$ are chosen i.i.d. uniformly at random over the set $[100]\times[100]$.
The experiments have three parameters: The true rank $r^*$ (of $UV^\top$), the percentage of observed entries 
$p=|\Omega|/10^4$, and the signal-to-noise ratio SNR. 
We measure the normalized MSE, i.e. $\|\Pi_{\Omega}(M-A)\|_F^2/\|\Pi_{\Omega}(M)\|_F^2$. The results can be seen
in Figure~\ref{fig:matrix_completion_train}, where it is illustrated that Fast Local Search sometimes returns significantly
more accurate and lower-rank solutions 
than Fast Greedy, and Fast Greedy generally returns significantly more accurate and lower-rank solutions than SoftImpute.

\begin{figure}[!htb]
\centering
\begin{subfigure}{.5\textwidth}
  \centering
  \includegraphics[width=\linewidth]{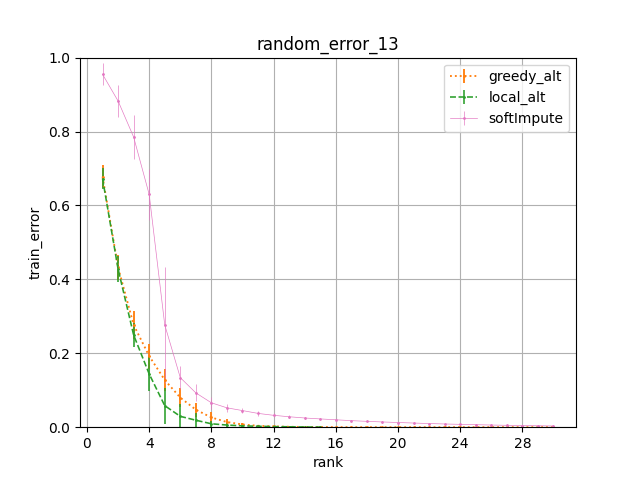}
  \caption{$r^*=5$, $p=0.2$, $SNR=10$}
\end{subfigure}%
\begin{subfigure}{.5\textwidth}
  \centering
  \includegraphics[width=\linewidth]{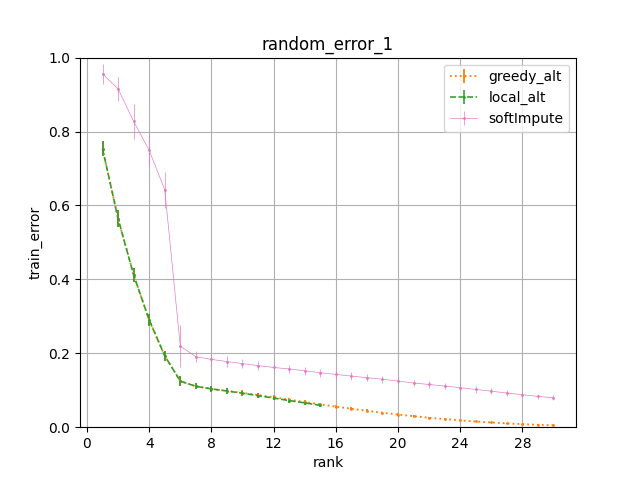}
  \caption{$r^*=6$, $p=0.5$, $SNR=1$}
\end{subfigure}%
\caption{Objective value error vs rank in the problem of Section~\ref{sec:svd_missing_data}.}
\label{fig:matrix_completion_train}
\end{figure}

\subsection{Robust principal component analysis (RPCA)}
\label{sec:rpca}

The robust PCA paradigm asks one to decompose a given matrix $M$ as $L+S$, where $L$ is a low-rank matrix
and $S$ is a sparse matrix.
This is useful for applications with outliers where directly computing the principal components of $M$ is
significantly affected by them. For a comprehensive survey on Robust PCA survey one can look at~\cite{bouwmans2018applications}.
The following optimization problem encodes the above-stated requirements:
\begin{equation}
\begin{aligned}
\underset{\mathrm{rank}(L) \leq r^*}{\min}\, \|M - L\|_0
\end{aligned}
\label{eq:rpca}
\end{equation}
where $\|X\|_0$ is the sparsity (i.e. number of non-zeros) of $X$. As neither 
the rank constraint or the $\ell_0$ function are convex, \cite{candes2011robust} replaced them by their usual
convex relaxations, i.e. the nuclear norm $\|\cdot\|_*$ and $\ell_1$ norm respectively. 
However, we opt to only relax the $\ell_0$ function but not the rank constraint,
leaving us with the problem:
\begin{equation}
\begin{aligned}
\underset{\mathrm{rank}(L) \leq r^*}{\min}\, \|M - L\|_1
\end{aligned}
\label{eq:rpca_ours}
\end{equation}
In order to make the objective differentiable and thus more well-behaved, we further replace the $\ell_1$ norm by the Huber loss
$H_\delta(x) = \begin{cases}
x^2/2 & \text{if $|x| \leq \delta$}\\
\delta |x| - \delta^2 / 2& \text{otherwise}
\end{cases}$,
thus getting:
$\underset{\mathrm{rank}(L) \leq r^*}{\min}\, \sum\limits_{ij} H_{\delta} (M - L)_{ij}$.
This is a problem on which we can directly apply our algorithms. We solve the inner optimization problem by applying
10 L-BFGS iterations.

In Figure~\ref{fig:rpca} one can see an example of foreground-background separation from video using robust PCA.
The video is from the BMC 2012 dataset~\cite{vacavant2012benchmark}.
In this problem, the low-rank part corresponds to the background and the sparse part to the foreground. We compare three algorithms:
Our Fast Greedy algorithm, standard PCA with 1 component (the choice of 1 was picked to get the best outcome),
and the standard Principal Component Pursuit (PCP) algorithm~(\cite{candes2011robust}), as implemented in~\cite{lin2010augmented},
where we tuned the regularization parameter $\lambda$ to achieve the best result.
We find that Fast Greedy has the best performance out of the three algorithms in this sample task.

\begin{figure}[!htb]
\centering
\begin{subfigure}{.33\textwidth}
  \centering
  \includegraphics[width=\linewidth]{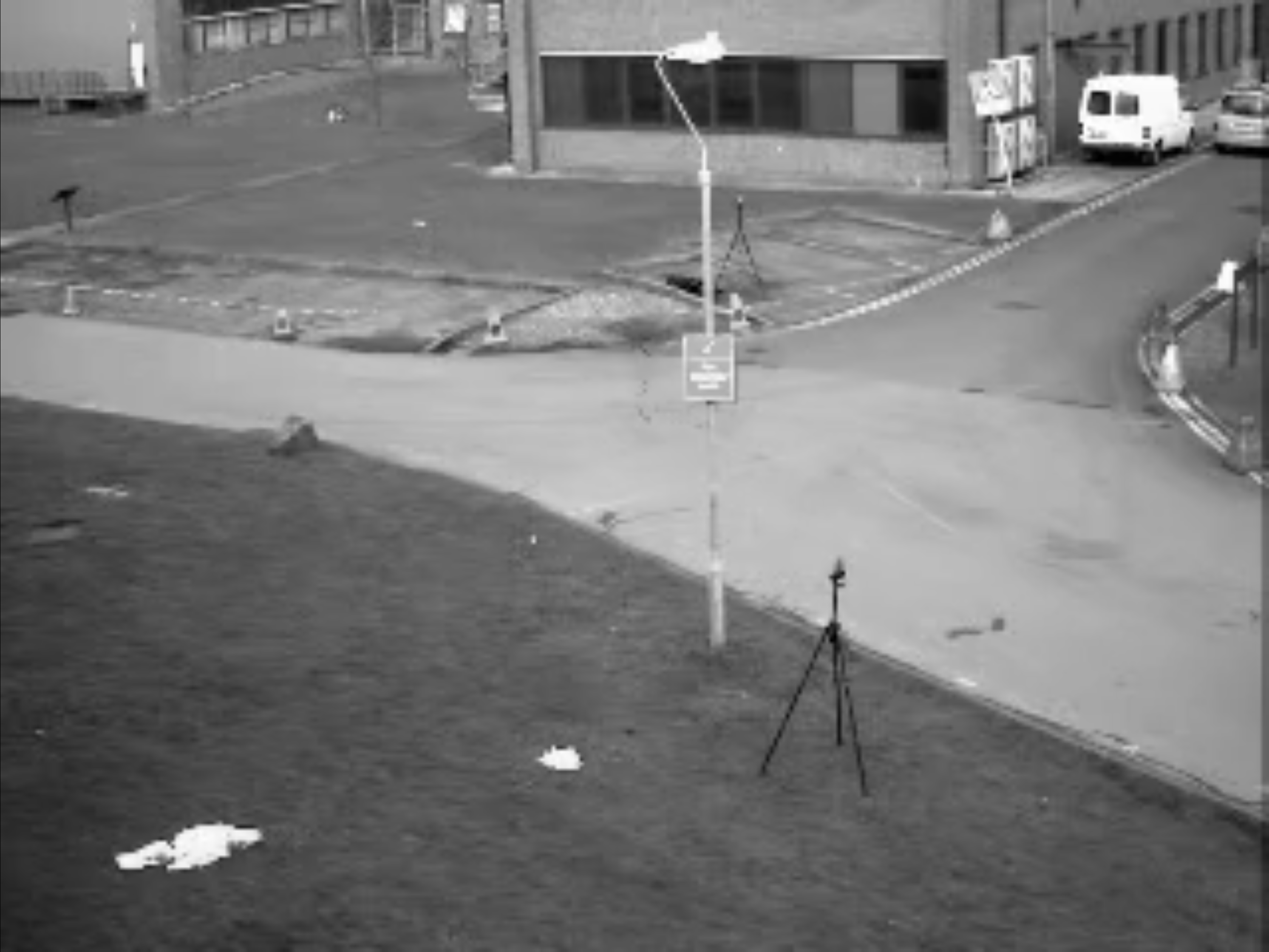}
\end{subfigure}%
\begin{subfigure}{.33\textwidth}
  \centering
  \includegraphics[width=\linewidth]{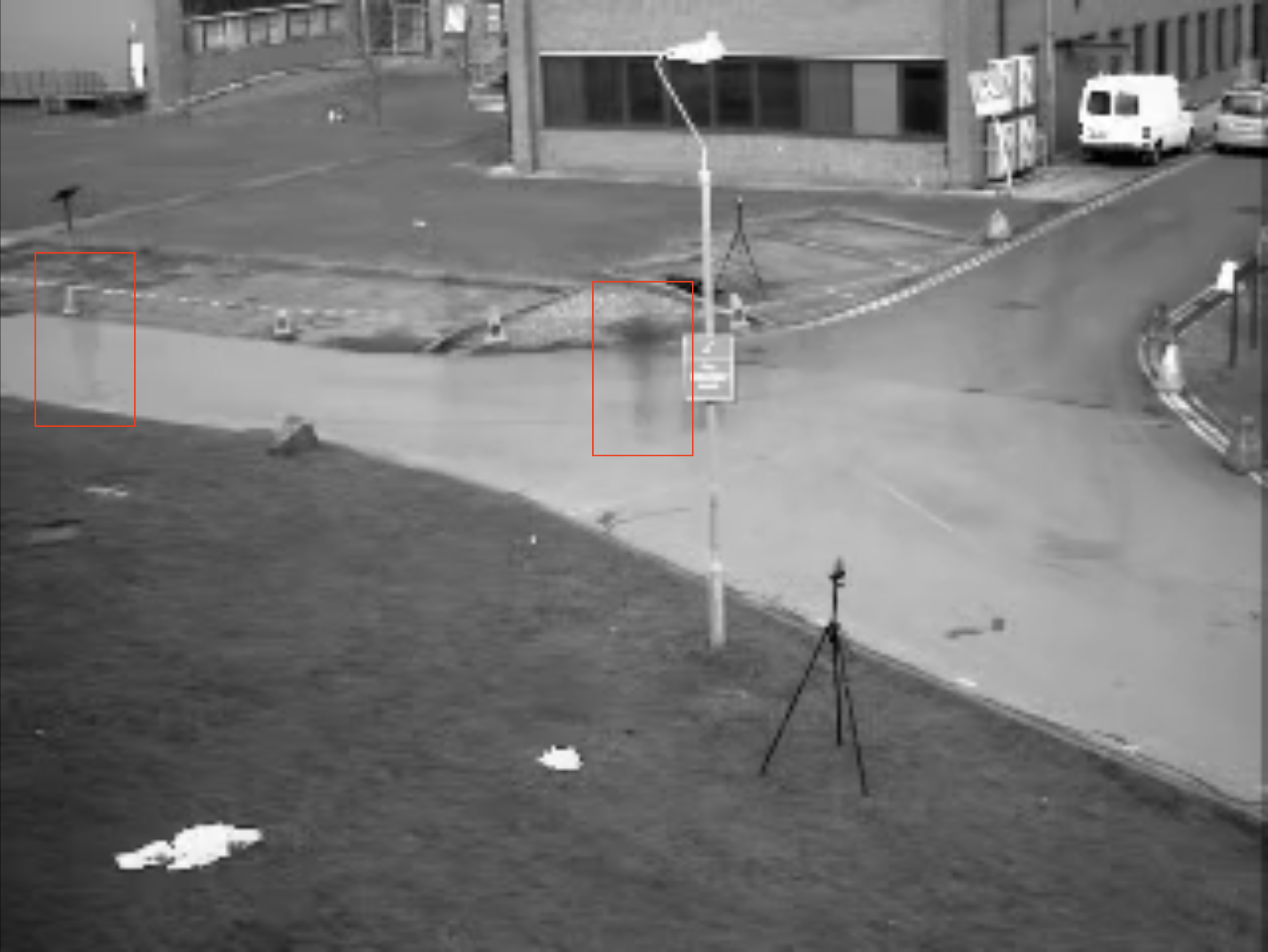}
\end{subfigure}%
\begin{subfigure}{.33\textwidth}
  \centering
  \includegraphics[width=\linewidth]{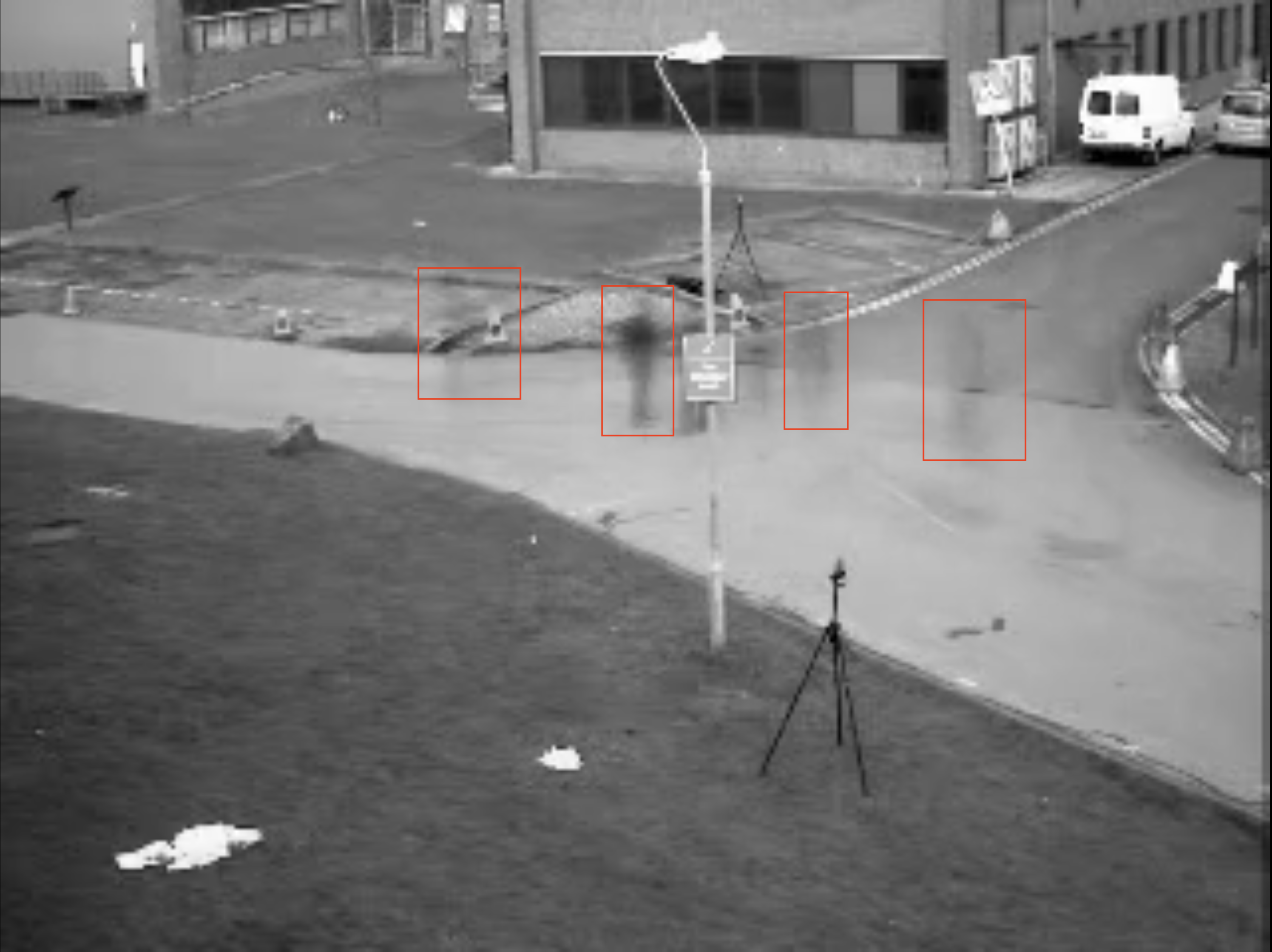}
\end{subfigure}
\begin{subfigure}{.33\textwidth}
  \centering
  \includegraphics[width=\linewidth]{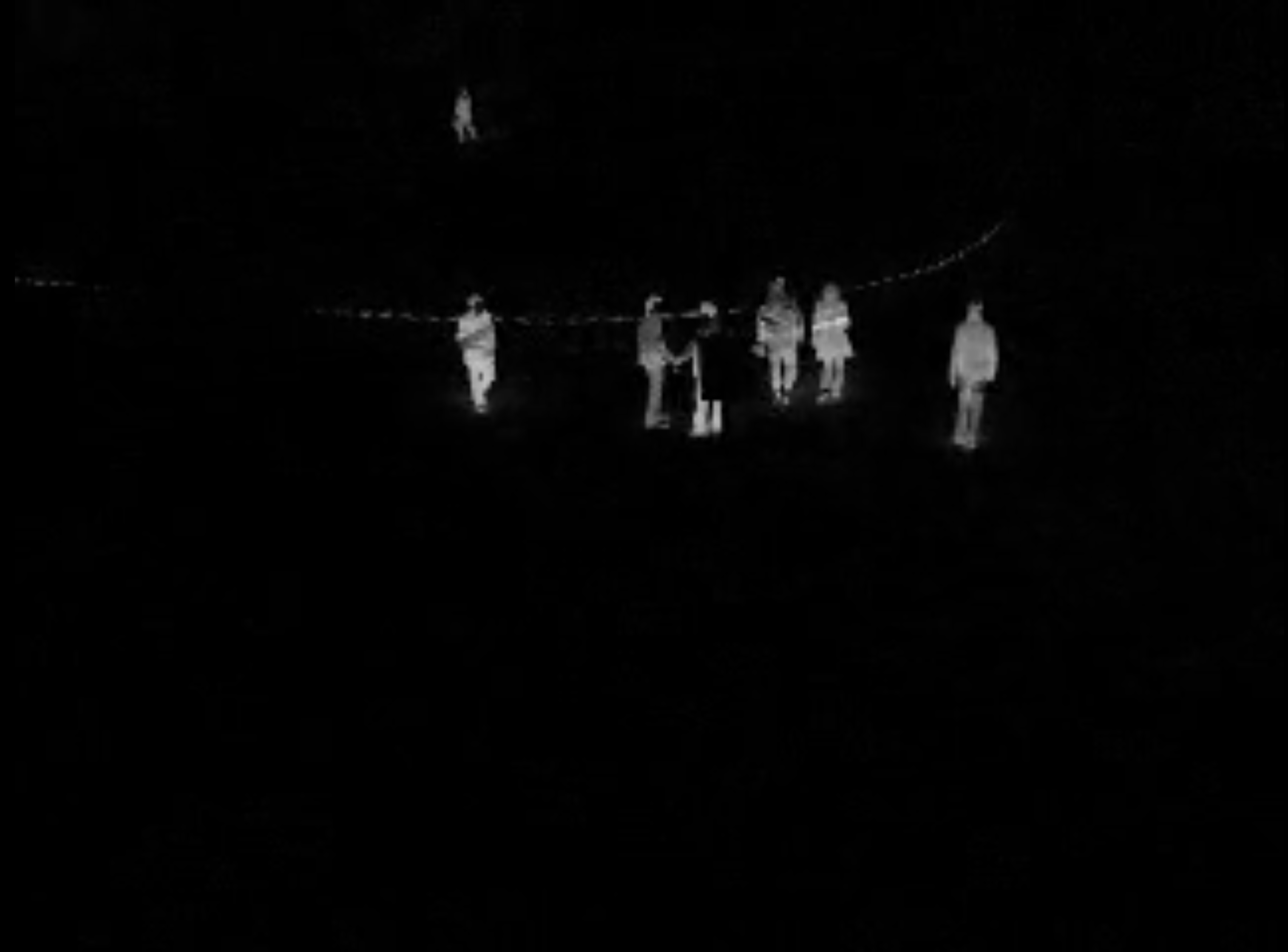}
\end{subfigure}%
\begin{subfigure}{.33\textwidth}
  \centering
  \includegraphics[width=\linewidth]{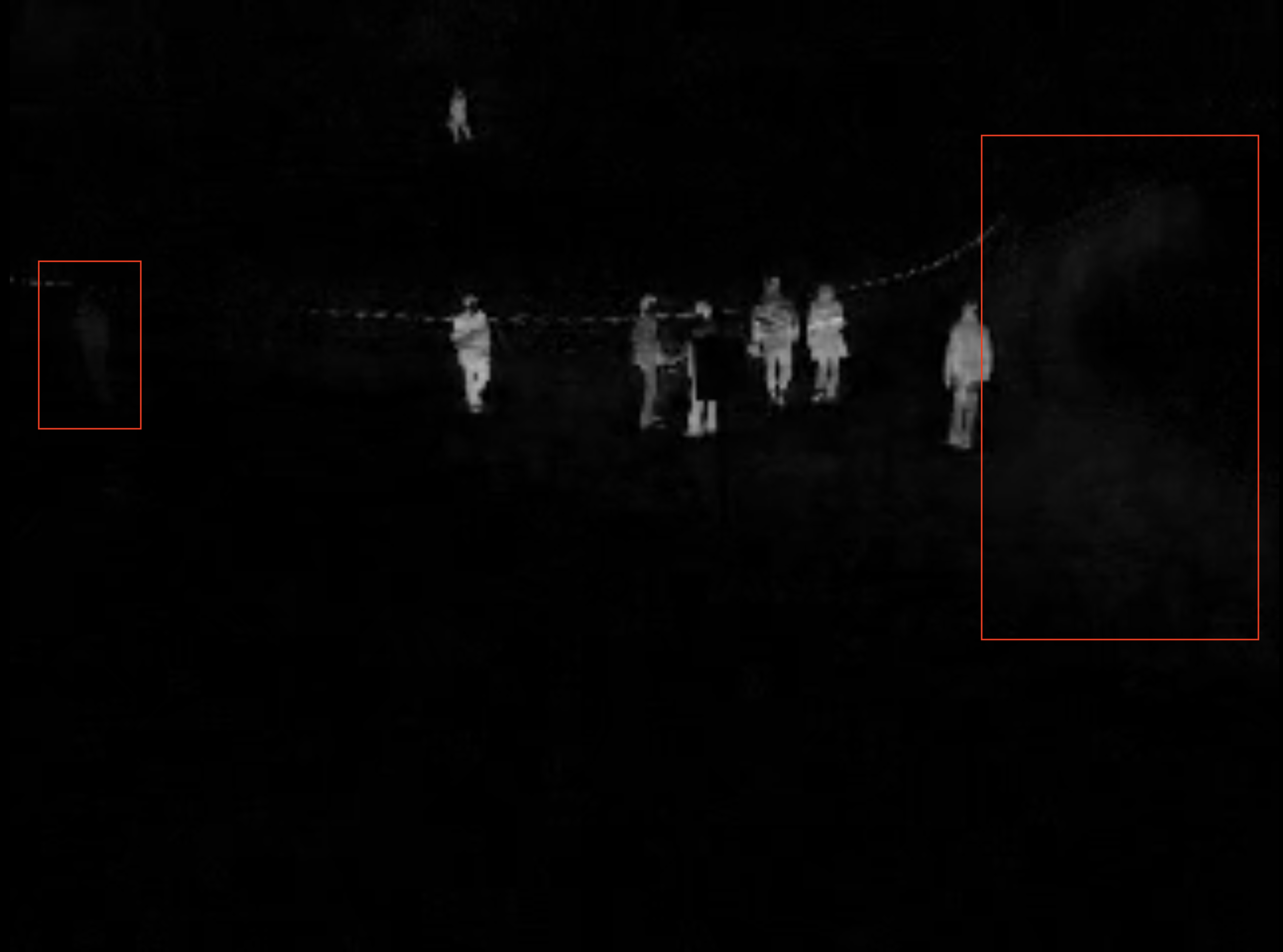}
\end{subfigure}%
\begin{subfigure}{.33\textwidth}
  \centering
  \includegraphics[width=\linewidth]{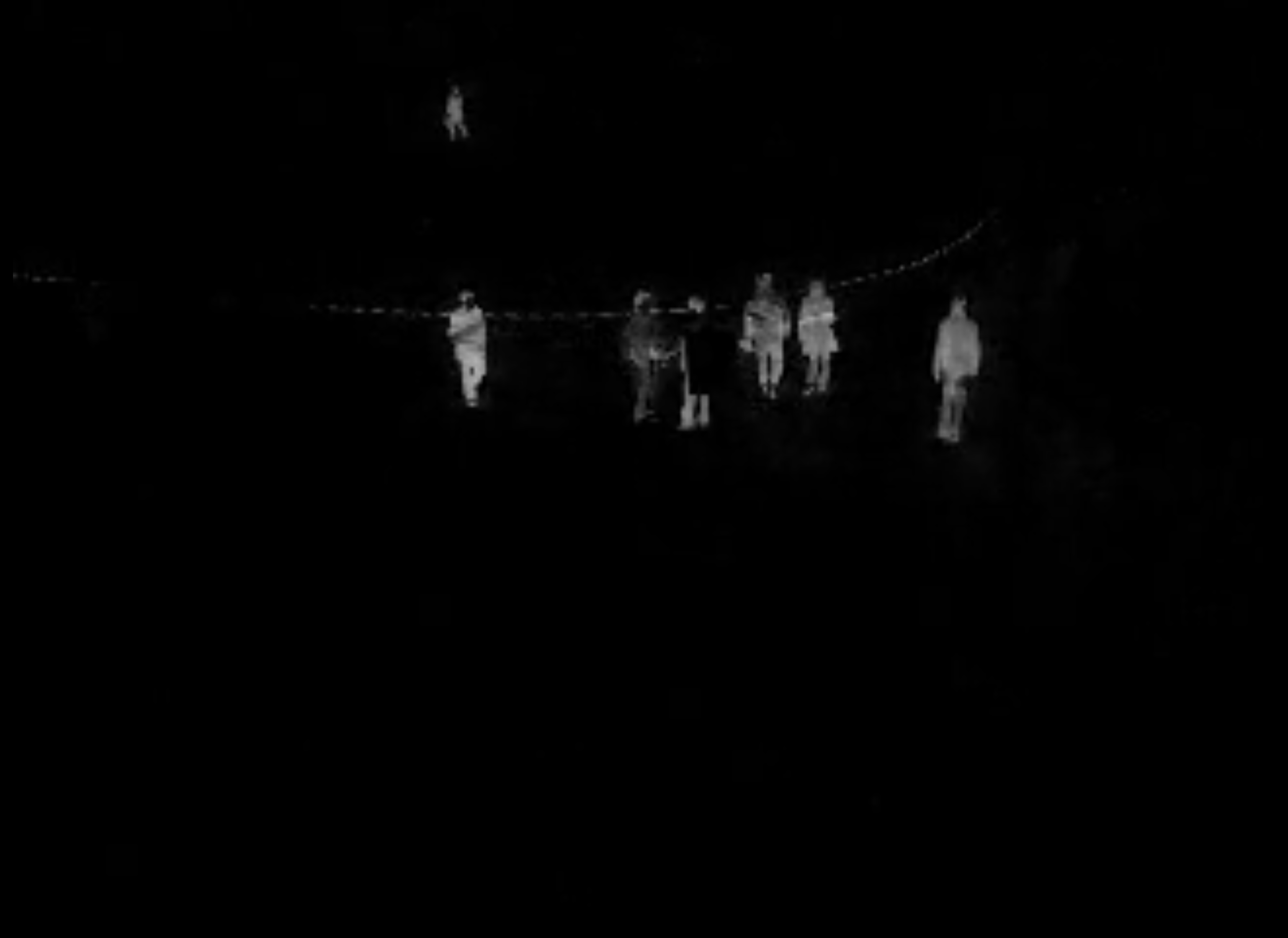}
\end{subfigure}%
\caption{Foreground-background separation from video. From left to right: Fast Greedy with rank=3 and Huber loss with $\delta=20$. Standard PCA with rank=1. Principal Component Pursuit (PCP) with $\lambda=0.002$.
Both PCA and PCP have visible "shadows" in the foreground that appear as "smudges" in the background. These are less obvious in a still frame but more apparent in a video.}
\label{fig:rpca}
\end{figure}

\section{Machine Learning Applications}
\label{sec:ML}

\subsection{Regularization techniques}

In the previous section we showed that our proposed algorithms bring down different optimization
objectives aggressively. However, in applications where the goal is to obtain a low generalization
error, regularization is needed.
We considered two different kinds of regularization. %
The first method is to run the inner optimization algorithm for less iterations, usually 2-3. Usually this
is straightforward since an iterative method is used. For example, in the case $R(A) = \frac{1}{2} \|\Pi_{\Omega}(M - A)\|_F^2$
the inner optimization is a least squares linear regression problem that we solve using the LSQR algorithm.
The second one is to add an $\ell_2$ regularizer to the objective function. However, this option did not provide
a substantial performance boost in our experiments, and so we have not implemented it.

\subsection{Matrix Completion with Random Noise}
\label{sec:matrix_completion_test}
In this section we evaluate our algorithms on the task of recovering a low rank matrix
$UV^\top$ after observing $\Pi_{\Omega}(UV^\top + \eta)$, i.e. a fraction of its entries with
added noise.
As in Section~\ref{sec:svd_missing_data}, we use the
setting of~\cite{mazumder2010spectral}
and compare with the SoftImpute method. 
The evaluation metric is 
the normalized MSE, defined as $(\sum\limits_{(i,j)\notin \Omega} (UV^\top - A)_{ij}^2)/(\sum\limits_{(i,j)\notin \Omega} (UV^\top)_{ij}^2)$,
where $A$ is the predicted matrix and $UV^\top$ the true low rank matrix.
A few example plots can be seen in Figure~\ref{fig:matrix_completion_test_1} 
and a table of results in Table~\ref{table:matrix_completion_test}.
We have implemented the Fast Greedy and Fast Local Search algorithms with 3 inner optimization iterations.
In the first few iterations there is a spike in the relative MSE of the algorithms that use the \textsc{Optimize\_Fast} routine. We attribute this to
the aggressive alternating minimization steps of this procedure and conjecture that 
adding a regularization term to the objective might smoothen the spike. However, 
the Fast Local Search algorithm still gives %
the best overall performance in terms of how well it approximates the true low rank matrix $UV^\top$, and in particular with a very small
rank---practically the same as the true underlying rank.

\begin{figure}[!htb]
\centering
\begin{subfigure}{.5\textwidth}
  \centering
  \includegraphics[width=\linewidth]{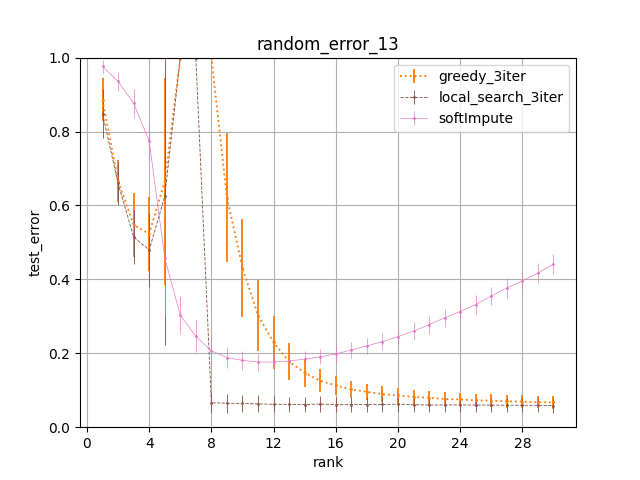}
  \caption{$k=5$,
          $p= 0.2$,
          $SNR= 10$}
  \label{fig:sub1}
\end{subfigure}%
\begin{subfigure}{.5\textwidth}
  \centering
  \includegraphics[width=\linewidth]{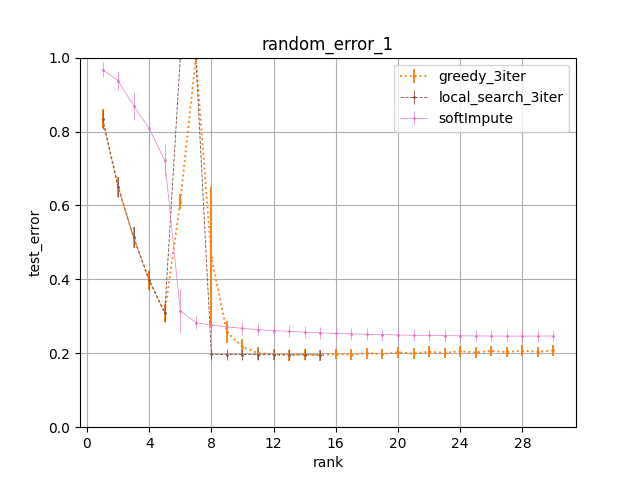}
  \caption{$k = 6$,
           $p = 0.5$,
           $SNR = 1$}
  \label{fig:sub2}
\end{subfigure}%
\caption{Test error vs rank in the matrix completion problem of
Section~\ref{sec:matrix_completion_test}. Bands of $\pm 1$ standard error are shown.
Note that SoftImpute starts to overfit for ranks larger than 12 in (a).
The ``jumps'' at around rank 5-7 happen because of overshooting (taking too large a step) during the insertion the rank-1 component in both Fast Greedy and Fast Local Search. More specifically, these implementations only apply 3 iterations of the inner optimization step, 
which in some cases are too few to amend the overshooting. However, after a few more iterations of the algorithm the overshooting issue is resolved (i.e. the algorithm has had enough iterations to scale down the rank-1 component that caused the overshooting). 
}
\label{fig:matrix_completion_test_1}
\end{figure}

\begin{table}
\centering
 \begin{tabular}{||c c c c||}
 \hline
 Algorithm & random\_error\_13 & random\_error\_1 & random\_error\_2 \\ [0.5ex]
 \hline\hline
 SoftImpute (\cite{mazumder2010spectral})& $0.1759/10$ &  \makecell{$0.2465/28$} & \makecell{$0.2284/30$} \\ 
 \hline
 Fast Greedy (Algorithm~\ref{alg:fast_greedy}) & \makecell{$0.0673/30$} & \makecell{$0.1948/13$} & \makecell{$0.1826/21$} \\
 \hline
 Fast Local Search (Algorithm~\ref{alg:fast_local_search}) & \makecell{$0.0613/14$}&  \makecell{$0.1952/15$} & \makecell{$0.1811/15$} \\ [1ex]
 \hline
\end{tabular}
\caption{Lowest test error for any rank in the matrix completion problem of
Section~\ref{sec:matrix_completion_test}, and associated rank returned by each algorithm. In the form error/rank.}
\label{table:matrix_completion_test}
\end{table}

\subsection{Recommender Systems}

In this section we compare our algorithms on the task of movie recommendation on the Movielens datasets~\cite{movielens}.
In order to evaluate the algorithms, we perform random 80\%-20\% train-test
splits that are the same for all algorithms and measure the mean RMSE in the test set.
If we let $\Omega\subseteq [m]\times [n]$ be the set of user-movie pairs in the training set, 
we assume that the true user-movie matrix is low rank, and thus pose~(\ref{eq:problem}) with 
$R(A) = \frac{1}{2} \|\Pi_{\Omega}(M - A)\|_F^2$.
We make the following slight modification in order to take into account the range of the ratings $[1,5]$: 
We clip the entries of $A$ between $1$ and $5$ when computing $\nabla R(A)$ in 
Algorithm~\ref{alg:fast_greedy} and Algorithm~\ref{alg:fast_local_search}.
In other words, instead of $\Pi_{\Omega}(A - M)$ we compute the gradient as $\Pi_{\Omega}(\mathrm{clip}(A,1,5)-M)$.
This is similar to replacing our objective by a Huber loss, with the difference that we 
only do so in the steps that we mentioned and not the inner optimization step, 
mainly for runtime efficiency reasons. %

The results can be seen in Table~\ref{table:movie_results}. 
We do not compare with Fast Local Search, as we found that it only provides an advantage for small ranks ($<30$),
and otherwise matches Fast Greedy.
For the
inner optimization steps we have used the LSQR algorithm with $2$ iterations in the 100K and 1M datasets, and with $3$ iterations in the 10M dataset.
Note that even though the SVD algorithm by~\cite{koren2009matrix} as implemented by~\cite{surprise} (with no user/movie bias terms) is a
highly tuned algorithm for recommender systems that was one of the top solutions in the famous Netflix prize, it has comparable performance
to our general-purpose Algorithm~\ref{alg:fast_greedy}.

\begin{table}[h!]
\centering
 \begin{tabular}{||c c c c||}
 \hline
 Algorithm & MovieLens 100K & MovieLens 1M & MovieLens 10M \\ [0.5ex]
 \hline\hline
 NMF (\cite{lee2001algorithms}) & $0.9659$ & $0.9166$ & $0.8960$\\ 
 \hline
 SoftImpute 
 & $1.0106$ & $0.9599$ & $0.957$ \\
 \hline
 Alternating Minimization  & $\mathbf{0.9355}$ & $0.8732$ & $0.8410$ \\
 \hline
 SVD (\cite{koren2009matrix}) & $0.9533$ & $0.8743$ & $\mathbf{0.8315}$\\ 
 \hline
 Fast Greedy (Algorithm~\ref{alg:fast_greedy}) & $0.9451$ & $\mathbf{0.8714}$ & $0.8330$ \\
 \hline
\end{tabular}
\caption{Mean RMSE and standard error among 5 random splits for 100K and 1M with standard errors $<0.01$, and 3 random splits for 10M with standard errors
$<0.001$.
The rank of the prediction is set to $100$ except for NMF where it is $15$ and Fast Greedy in the 10M dataset where it is chosen to be $35$ by cross-validation.
 Alternating Minimization is a well known algorithm (e.g. \cite{srebro2004maximum}) that alternatively minimizes the left and right subspace, and also uses Frobenius
 norm regularization.
 For SoftImpute and Alternating Minimization we have found the best choice of parameters by performing a grid search over the rank and the multiplier of the regularization term.
 We have found the best choice of parameters by performing a grid search over the rank and the multiplier of the regularization term. We ran 20 iterations of Alternating Minimization in each case.} 
\label{table:movie_results}
\end{table}
Finally, Table~\ref{table:time} demonstrates the speedup achieved by our algorithms over the basic greedy implementation. It should be noted
that the speedup compared to the basic greedy of~\cite{shalev2011large} (Algorithm~\ref{alg:greedy}) is larger as rank increases, since the fast algorithms scale linearly with rank, but the basic greedy scales
quadratically.
\begin{table}[h!]
\centering
 \begin{tabular}{||c c c c||}
 \hline
 Algorithm & Figure~\ref{fig:matrix_completion_test_1} (a) & Movielens 100K  & Movielens 1M\\ [0.5ex]
 \hline\hline
 SoftImpute & $10.6$ & $9.4$ & $40.6$ \\
 \hline
 Alternating Minimization & $18.9$ & $252.0$ & $1141.4$ \\
 \hline
 Greedy~(\cite{shalev2011large}) & $18.8$ & $418.4$ & $4087.3$\\
 \hline
 Fast Greedy & $10.2$ & $43.4$ & $244.2$\\
 \hline
 Fast Local Search & $10.8$ & $46.1$ & $263.0$\\
 \hline
\end{tabular}
\caption{Runtimes (in seconds) of different algorithms for fitting a rank=30 solution in various experiments. Code written in python and tested on an Intel Skylake CPU with 16 vCPUs.}
\label{table:time}
\end{table}

It is important to note that our goal here is not to be competitive with the best known algorithms for matrix completion,
but rather to propose a general yet practically applicable method for rank-constrained convex optimization. For a recent survey on the best performing algorithms
in the Movielens datasets see~\cite{rendle2019difficulty}. It should be noted that a lot of these algorithms have significant performance boost compared to our methods
because they use additional features (meta information about each user, movie, timestamp of a rating, etc.) or stronger models (user/movie biases, "implicit" ratings).
A runtime comparison with these recent approches is an interesting avenue for future work. 
As a rule of thumb, however, Fast Greedy has roughly the same runtime as SVD (\cite{koren2009matrix}) in each iteration, i.e. $O(|\Omega| r)$, where $\Omega$ is the set
of observable elements and $r$ is the rank. As some better performing approaches have been reported to be much slower than SVD (e.g. SVD++ is reported to be 50-100x slower
than SVD in the Movielens 100K and 1M datasets (\cite{surprise}), this might also suggest a runtime advantage of our approach compared to some better performing methods.

\FloatBarrier

\section{Conclusions}

We presented simple algorithms with strong theoretical guarantees for optimizing a convex function under a rank constraint.
Although the basic versions of these algorithms have appeared before, through a series of careful 
runtime, optimization, and generalization performance
improvements 
that we introduced, we have managed to reshape the performance of these algorithms in all fronts.
Via our experimental validation on a host of practical problems
such as low-rank matrix recovery with missing data, robust principal component analysis, and recommender systems,
we have shown that the performance in terms of the solution quality
matches or exceeds other widely used and even specialized solutions, thus making the argument that our 
Fast Greedy and Fast Local Search routines can be regarded as strong and practical general purpose tools
for rank-constrained convex optimization. 
Interesting directions for further research include the exploration of different kinds of regularization and 
tuning for machine learning applications, as well as a competitive implementation and extensive runtime comparison
of our algorithms.

\bibliography{references}

\begin{thebibliography}{26}
\providecommand{\natexlab}[1]{#1}
\providecommand{\url}[1]{\texttt{#1}}
\expandafter\ifx\csname urlstyle\endcsname\relax
  \providecommand{\doi}[1]{doi: #1}\else
  \providecommand{\doi}{doi: \begingroup \urlstyle{rm}\Url}\fi

\bibitem[Axiotis \& Sviridenko(2020)Axiotis and Sviridenko]{axiotis2020sparse}
Kyriakos Axiotis and Maxim Sviridenko.
\newblock Sparse convex optimization via adaptively regularized hard
  thresholding.
\newblock \emph{arXiv preprint arXiv:2006.14571}, 2020.

\bibitem[Bouwmans et~al.(2018)Bouwmans, Javed, Zhang, Lin, and
  Otazo]{bouwmans2018applications}
Thierry Bouwmans, Sajid Javed, Hongyang Zhang, Zhouchen Lin, and Ricardo Otazo.
\newblock On the applications of robust pca in image and video processing.
\newblock \emph{Proceedings of the IEEE}, 106\penalty0 (8):\penalty0
  1427--1457, 2018.

\bibitem[Cand{\`e}s et~al.(2011)Cand{\`e}s, Li, Ma, and
  Wright]{candes2011robust}
Emmanuel~J Cand{\`e}s, Xiaodong Li, Yi~Ma, and John Wright.
\newblock Robust principal component analysis?
\newblock \emph{Journal of the ACM (JACM)}, 58\penalty0 (3):\penalty0 1--37,
  2011.

\bibitem[Fisk(1996)]{fisk1996note}
Steve Fisk.
\newblock A note on weyl's inequality.
\newblock 1996.

\bibitem[Foster et~al.(2015)Foster, Karloff, and Thaler]{FKT15}
Dean Foster, Howard Karloff, and Justin Thaler.
\newblock Variable selection is hard.
\newblock In \emph{Conference on Learning Theory}, pp.\  696--709, 2015.

\bibitem[Harper \& Konstan(2015)Harper and Konstan]{movielens}
F~Maxwell Harper and Joseph~A Konstan.
\newblock The movielens datasets: History and context.
\newblock \emph{Acm transactions on interactive intelligent systems (tiis)},
  5\penalty0 (4):\penalty0 1--19, 2015.

\bibitem[Hug(2020)]{surprise}
Nicolas Hug.
\newblock Surprise: A python library for recommender systems.
\newblock \emph{Journal of Open Source Software}, 5\penalty0 (52):\penalty0
  2174, 2020.
\newblock \doi{10.21105/joss.02174}.
\newblock URL \url{https://doi.org/10.21105/joss.02174}.

\bibitem[Jain et~al.(2011)Jain, Tewari, and Dhillon]{JTD11}
Prateek Jain, Ambuj Tewari, and Inderjit~S Dhillon.
\newblock Orthogonal matching pursuit with replacement.
\newblock In \emph{Advances in neural information processing systems}, pp.\
  1215--1223, 2011.

\bibitem[Jain et~al.(2014)Jain, Tewari, and Kar]{JTK14}
Prateek Jain, Ambuj Tewari, and Purushottam Kar.
\newblock On iterative hard thresholding methods for high-dimensional
  m-estimation.
\newblock In \emph{Advances in Neural Information Processing Systems}, pp.\
  685--693, 2014.

\bibitem[Koren et~al.(2009)Koren, Bell, and Volinsky]{koren2009matrix}
Yehuda Koren, Robert Bell, and Chris Volinsky.
\newblock Matrix factorization techniques for recommender systems.
\newblock \emph{Computer}, 42\penalty0 (8):\penalty0 30--37, 2009.

\bibitem[Lee \& Seung(2001)Lee and Seung]{lee2001algorithms}
Daniel~D Lee and H~Sebastian Seung.
\newblock Algorithms for non-negative matrix factorization.
\newblock In \emph{Advances in neural information processing systems}, pp.\
  556--562, 2001.

\bibitem[Lin et~al.(2010)Lin, Chen, and Ma]{lin2010augmented}
Zhouchen Lin, Minming Chen, and Yi~Ma.
\newblock The augmented lagrange multiplier method for exact recovery of
  corrupted low-rank matrices.
\newblock \emph{arXiv preprint arXiv:1009.5055}, 2010.

\bibitem[Martinsson et~al.(2011)Martinsson, Rokhlin, and
  Tygert]{martinsson2011randomized}
Per-Gunnar Martinsson, Vladimir Rokhlin, and Mark Tygert.
\newblock A randomized algorithm for the decomposition of matrices.
\newblock \emph{Applied and Computational Harmonic Analysis}, 30\penalty0
  (1):\penalty0 47--68, 2011.

\bibitem[Mazumder et~al.(2010)Mazumder, Hastie, and
  Tibshirani]{mazumder2010spectral}
Rahul Mazumder, Trevor Hastie, and Robert Tibshirani.
\newblock Spectral regularization algorithms for learning large incomplete
  matrices.
\newblock \emph{The Journal of Machine Learning Research}, 11:\penalty0
  2287--2322, 2010.

\bibitem[Natarajan(1995)]{Natarajan95}
Balas~Kausik Natarajan.
\newblock Sparse approximate solutions to linear systems.
\newblock \emph{SIAM journal on computing}, 24\penalty0 (2):\penalty0 227--234,
  1995.

\bibitem[Paige \& Saunders(1982)Paige and Saunders]{lsqr}
Christopher~C Paige and Michael~A Saunders.
\newblock Lsqr: An algorithm for sparse linear equations and sparse least
  squares.
\newblock \emph{ACM Transactions on Mathematical Software (TOMS)}, 8\penalty0
  (1):\penalty0 43--71, 1982.

\bibitem[Pati et~al.(1993)Pati, Rezaiifar, and
  Krishnaprasad]{pati1993orthogonal}
Yagyensh~Chandra Pati, Ramin Rezaiifar, and Perinkulam~Sambamurthy
  Krishnaprasad.
\newblock Orthogonal matching pursuit: Recursive function approximation with
  applications to wavelet decomposition.
\newblock In \emph{Proceedings of 27th Asilomar conference on signals, systems
  and computers}, pp.\  40--44. IEEE, 1993.

\bibitem[Rendle et~al.(2019)Rendle, Zhang, and Koren]{rendle2019difficulty}
Steffen Rendle, Li~Zhang, and Yehuda Koren.
\newblock On the difficulty of evaluating baselines: A study on recommender
  systems.
\newblock \emph{arXiv preprint arXiv:1905.01395}, 2019.

\bibitem[Rubinsteyn \& Feldman(2016)Rubinsteyn and Feldman]{fancyimpute}
Alex Rubinsteyn and Sergey Feldman.
\newblock fancyimpute: Version 0.0.16, May 2016.
\newblock URL \url{https://doi.org/10.5281/zenodo.51773}.

\bibitem[Shalev-Shwartz et~al.(2010)Shalev-Shwartz, Srebro, and Zhang]{SSZ10}
Shai Shalev-Shwartz, Nathan Srebro, and Tong Zhang.
\newblock Trading accuracy for sparsity in optimization problems with sparsity
  constraints.
\newblock \emph{SIAM Journal on Optimization}, 20\penalty0 (6):\penalty0
  2807--2832, 2010.

\bibitem[Shalev-Shwartz et~al.(2011)Shalev-Shwartz, Gonen, and
  Shamir]{shalev2011large}
Shai Shalev-Shwartz, Alon Gonen, and Ohad Shamir.
\newblock Large-scale convex minimization with a low-rank constraint.
\newblock In \emph{Proceedings of the 28th International Conference on
  International Conference on Machine Learning}, pp.\  329--336, 2011.

\bibitem[Srebro et~al.(2004)Srebro, Rennie, and Jaakkola]{srebro2004maximum}
Nathan Srebro, Jason Rennie, and Tommi Jaakkola.
\newblock Maximum-margin matrix factorization.
\newblock \emph{Advances in neural information processing systems},
  17:\penalty0 1329--1336, 2004.

\bibitem[Szlam et~al.(2014)Szlam, Kluger, and Tygert]{szlam2014implementation}
Arthur Szlam, Yuval Kluger, and Mark Tygert.
\newblock An implementation of a randomized algorithm for principal component
  analysis.
\newblock \emph{arXiv preprint arXiv:1412.3510}, 2014.

\bibitem[Tulloch(2014)]{fbpca}
Andrew Tulloch.
\newblock Fast randomized svd.
\newblock \url{https://research.fb.com/blog/2014/09/fast-randomized-svd/},
  2014.

\bibitem[Vacavant et~al.(2012)Vacavant, Chateau, Wilhelm, and
  Lequi{\`e}vre]{vacavant2012benchmark}
Antoine Vacavant, Thierry Chateau, Alexis Wilhelm, and Laurent Lequi{\`e}vre.
\newblock A benchmark dataset for outdoor foreground/background extraction.
\newblock In \emph{Asian Conference on Computer Vision}, pp.\  291--300.
  Springer, 2012.

\bibitem[Zhang(2011)]{zhang2011sparse}
Tong Zhang.
\newblock Sparse recovery with orthogonal matching pursuit under rip.
\newblock \emph{IEEE Transactions on Information Theory}, 57\penalty0
  (9):\penalty0 6215--6221, 2011.

\end{thebibliography}
\bibliographystyle{iclr2021_conference}

\appendix
\section{Appendix}
\subsection{Preliminaries and Notation}
Given an positive integer $k$, we denote $[k] = \{1,2,\dots, k\}$.
Given a matrix $A$, we denote by $\|A\|_F$ its Frobenius norm, i.e. the $\ell_2$ norm of the entries of $A$ (or equivalently of the singular values of $A$).
The following lemma is a simple corollary of the definition of the Frobenius norm:
\begin{lemma}
Given two matrices $A\in\mathbb{R}^{m\times n}$,$B\in\mathbb{R}^{m\times n}$, we have
$\|A + B\|_F^2 \leq 2 \left(\|A\|_F^2 + \|B\|_F^2\right)$.
\label{lem:frobenius_sum2}
\end{lemma}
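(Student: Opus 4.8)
The plan is to reduce the matrix inequality to an entrywise scalar inequality. Since the Frobenius norm is by definition the $\ell_2$ norm of the matrix entries, we have $\|A+B\|_F^2 = \sum_{i,j} (A_{ij}+B_{ij})^2$, and likewise $\|A\|_F^2 = \sum_{i,j} A_{ij}^2$ and $\|B\|_F^2 = \sum_{i,j} B_{ij}^2$. It therefore suffices to establish the elementary scalar fact that $(a+b)^2 \le 2(a^2+b^2)$ for all reals $a,b$, and then sum it over all $mn$ index pairs.

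For the scalar inequality I would start from $(a-b)^2 \ge 0$, which expands to $2ab \le a^2 + b^2$, and hence $(a+b)^2 = a^2 + 2ab + b^2 \le 2a^2 + 2b^2$. Applying this with $a = A_{ij}$ and $b = B_{ij}$ and summing over all $(i,j)$ gives $\sum_{i,j}(A_{ij}+B_{ij})^2 \le 2\sum_{i,j}(A_{ij}^2 + B_{ij}^2)$, which is precisely the desired bound.

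An alternative, perhaps cleaner, route is to invoke the parallelogram law, which holds because $\|\cdot\|_F$ is induced by the entrywise inner product $\langle A,B\rangle = \sum_{i,j} A_{ij}B_{ij}$. This yields the exact identity $\|A+B\|_F^2 + \|A-B\|_F^2 = 2\left(\|A\|_F^2 + \|B\|_F^2\right)$, and discarding the nonnegative term $\|A-B\|_F^2$ immediately produces the claim.

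There is essentially no obstacle here: the result is a direct consequence of the nonnegativity of a square (equivalently, the convexity of $x \mapsto x^2$), and no property of matrices beyond the entrywise definition of the Frobenius norm is needed. The only point worth noting is that the constant $2$ is tight, being attained when $A = B$.
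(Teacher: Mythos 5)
Your proof is correct and follows the same route as the paper's: expand $\|\cdot\|_F^2$ entrywise and apply the scalar inequality $(a+b)^2 \le 2(a^2+b^2)$ to each entry, with the added benefit that you justify the scalar step (via $(a-b)^2 \ge 0$) and note tightness, which the paper leaves implicit. The parallelogram-law alternative you mention is a clean equivalent formulation of the same fact, not a genuinely different argument.
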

\begin{proof}
\[ \|A + B\|_F^2 = \sum\limits_{ij} (A+B)_{ij}^2 \leq 2\sum\limits_{ij} (A_{ij}^2+B_{ij}^2) = 2(\|A\|_F^2 + \|B\|_F^2) \]
\end{proof}
\begin{definition}[Rank-restricted smoothness, strong convexity, condition number]
Given a convex function $R\in\mathbb{R}^{m\times n}\rightarrow \mathbb{R}$ and an integer parameter $r$,
the \emph{rank-restricted smoothness of $R$ at rank $r$} is the minimum constant
$\rho_r^+ \geq 0$ such that for any two matrices $A\in\mathbb{R}^{m\times n}, B\in\mathbb{R}^{m\times n}$
such that $\mathrm{rank}(A-B) \leq r$, we have
\[ R(B) \leq R(A) + \langle \nabla R(A), B-A\rangle + \frac{\rho_r^+}{2} \|B-A\|_F^2 \,. \]
Similarly, the
\emph{rank-restricted strong convexity of $R$ at rank $r$} is the maximum constant
$\rho_r^- \geq 0$ such that for any two matrices $A\in\mathbb{R}^{m\times n}, B\in\mathbb{R}^{m\times n}$
such that $\mathrm{rank}(A-B) \leq r$, we have
\[ R(B) \geq R(A) + \langle \nabla R(A), B-A\rangle + \frac{\rho_r^-}{2} \|B-A\|_F^2 \,. \]
Given that $\rho_r^+, \rho_r^-$ exist and are nonzero,
the \emph{rank-restricted condition number of $R$ at rank $r$} is then defined as 
\[ \kappa_r = \frac{\rho_r^+}{\rho_r^-} \]
\end{definition}
Note that $\rho_r^+$ is increasing and $\rho_r^-$ is decreasing in $r$. Therefore, even though our bounds
are proven in terms of the constants $\frac{\rho_1^+}{\rho_r^-}$ and $\frac{\rho_2^+}{\rho_r^-}$,
these quantities are always at most $\frac{\rho_r^+}{\rho_r^-} = \kappa_r$ as long as $r \geq 2$, and so they directly imply
the same bounds in terms of the constant $\kappa_r$.
\begin{definition}[Spectral norm]
Given a matrix $A\in\mathbb{R}^{m\times n}$, we denote its spectral norm by $\|A\|_2$.
The spectral norm is defined as
\[\|A\|_2=\underset{x\in\mathbb{R}^n}{\max}\, \frac{\|A x\|_2}{\|x\|_2} \,, \]
\end{definition}
\begin{definition}[Singular value thresholding operator]
Given a matrix $A\in\mathbb{R}^{m\times n}$ of rank $k$, a singular value decomposition
$A = U\Sigma V^\top$ such that $\Sigma_{11} \geq \Sigma_{22} \geq \dots \geq \Sigma_{kk}$,
and an integer $1 \leq r \leq k$, we define 
$H_r(A) = U \Sigma' V^\top$, here $\Sigma'$ is a diagonal matrix with
\begin{align*}
\Sigma_{ii}' = \begin{cases}
\Sigma_{ii} & \text{if $i \leq r$}\\
0 & \text{otherwise}
\end{cases}
\end{align*}
In other words, $H_r(\cdot)$ is an operator that eliminates all but the top $r$ highest singular values
of a matrix.
\end{definition}
\begin{lemma}[Weyl's inequality]
For any matrix $A$ and integer $i\geq 1$, let
$\sigma_i(A)$ be the $i$-th largest singular value of $A$
or $0$ if $i > \mathrm{rank}(A)$.
Then, for any two matrices $A,B$ and integers $i\geq 1, j\geq 1$:
\[ \sigma_{i+j-1}(A+B) \leq \sigma_i(A) + \sigma_j(B) \]
\label{lem:weyl}
\end{lemma}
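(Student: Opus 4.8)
The plan is to prove the inequality through the variational (Courant--Fischer) characterization of singular values, which generalizes the spectral-norm definition already recorded above. For a matrix $M\in\mathbb{R}^{m\times n}$, writing $\sigma_k(M)$ for its $k$-th largest singular value, one has $\sigma_k(M)^2=\lambda_k(M^\top M)$, and Courant--Fischer applied to the symmetric positive semidefinite matrix $M^\top M$ yields the min--max formula
\[ \sigma_k(M)=\min_{\substack{S\subseteq\mathbb{R}^n\\ \dim S=n-k+1}}\ \max_{\substack{x\in S\\ \|x\|_2=1}}\ \|Mx\|_2\,, \]
valid for $1\leq k\leq n$. I would first establish (or simply invoke) this formula; it is the only nontrivial ingredient, and it specializes to the stated definition $\|M\|_2=\sigma_1(M)$ when $k=1$.

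\textbf{Key steps.} First I would dispose of the degenerate case: if $i+j-1>n$ then $\sigma_{i+j-1}(A+B)=0$ by the stated convention, and since the right-hand side is nonnegative the inequality is immediate. Assume therefore $i+j-1\leq n$. Let $S$ be a subspace of dimension $n-i+1$ attaining the minimum in the formula for $\sigma_i(A)$, and let $T$ be a subspace of dimension $n-j+1$ attaining the minimum for $\sigma_j(B)$. The crucial observation is a dimension count: since $S,T\subseteq\mathbb{R}^n$,
\[ \dim(S\cap T)\ \geq\ \dim S+\dim T-n\ =\ (n-i+1)+(n-j+1)-n\ =\ n-(i+j-1)+1\,. \]
Hence $S\cap T$ contains a subspace $W$ with $\dim W=n-(i+j-1)+1$, which (since $i,j\geq 1$ forces $1\leq\dim W\leq n$) is an admissible competitor in the outer minimization defining $\sigma_{i+j-1}(A+B)$.

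\textbf{Conclusion.} For every unit vector $x\in W\subseteq S\cap T$, the triangle inequality gives $\|(A+B)x\|_2\leq\|Ax\|_2+\|Bx\|_2$. Because $x\in S$ we have $\|Ax\|_2\leq\sigma_i(A)$, and because $x\in T$ we have $\|Bx\|_2\leq\sigma_j(B)$. Taking the maximum over unit $x\in W$ and using $W$ as a candidate subspace in the formula for $\sigma_{i+j-1}(A+B)$ yields
\[ \sigma_{i+j-1}(A+B)\ \leq\ \max_{\substack{x\in W\\ \|x\|_2=1}}\|(A+B)x\|_2\ \leq\ \sigma_i(A)+\sigma_j(B)\,, \]
completing the argument.

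\textbf{Main obstacle.} The substance of the proof lies entirely in the min--max characterization together with the subspace-intersection count; once those are in place the triangle inequality finishes the job. I expect the care to be concentrated in (i) justifying the min--max formula for singular values, by reducing to Courant--Fischer for $M^\top M$ or equivalently to the additive Weyl inequality for the Hermitian dilation $\left(\begin{smallmatrix}0&M\\ M^\top&0\end{smallmatrix}\right)$, whose nonzero eigenvalues are $\pm\sigma_k(M)$, and (ii) the index bookkeeping at the boundary, so that the convention $\sigma_i=0$ beyond the rank stays consistent with the formula and the degenerate case $i+j-1>n$ is handled cleanly.
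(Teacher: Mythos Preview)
Your argument is correct: the Courant--Fischer min--max characterization of singular values, combined with the subspace-intersection dimension bound and the triangle inequality, is the standard route to this inequality, and your bookkeeping (including the degenerate case $i+j-1>n$) is sound. The paper itself does not supply a proof of this lemma at all; it simply states the result and cites an external reference (Fisk), so there is no in-paper argument to compare against.
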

A proof of the previous fact can be found e.g. in~\cite{fisk1996note}.
\begin{lemma}[$H_r(\cdot)$ optimization problem]
Let $A\in\mathbb{R}^{m\times n}$ be a rank-$k$ matrix and $r\in[k]$ be an integer parameter.
Then $M=\frac{1}{\lambda} H_r(A)$ is an optimal solution to the following optimization problem:
\begin{align}
\underset{\mathrm{rank}(M) \leq r}\max\{\langle A, M\rangle - \frac{\lambda}{2} \|M\|_F^2\}
\label{eq:frobenius_rank_optimization}
\end{align}
\label{lem:frobenius_rank_optimization}
\end{lemma}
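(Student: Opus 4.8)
The plan is to reduce this constrained maximization to a best rank-$r$ approximation problem by completing the square, and then to invoke the Eckart--Young--Mirsky characterization, whose proof I would supply directly from Weyl's inequality (Lemma~\ref{lem:weyl}), which is exactly why it was stated above. Assuming $\lambda > 0$, the first step is the algebraic identity
\[
\langle A, M\rangle - \frac{\lambda}{2}\|M\|_F^2 = \frac{1}{2\lambda}\|A\|_F^2 - \frac{\lambda}{2}\bigl\|M - \tfrac{1}{\lambda}A\bigr\|_F^2\,,
\]
which holds for every $M$. Since the first term on the right is a constant independent of $M$ and $\lambda > 0$, maximizing the objective over matrices of rank at most $r$ is equivalent to minimizing $\bigl\|M - \tfrac{1}{\lambda}A\bigr\|_F^2$ over the same feasible set, i.e. to finding the best rank-$r$ Frobenius approximation of the matrix $B := \tfrac{1}{\lambda}A$.

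Next I would record that truncation commutes with positive scaling: because $\lambda > 0$, an SVD of $B = \tfrac{1}{\lambda}A$ shares the singular vectors of $A$ and has singular values $\sigma_i(A)/\lambda$ in the same (decreasing) order, so $H_r(B) = \tfrac{1}{\lambda}H_r(A)$. Thus it suffices to show that $H_r(B)$ is the minimizer of $\|M - B\|_F^2$ among all $M$ with $\mathrm{rank}(M)\leq r$; the claimed optimal solution $\tfrac{1}{\lambda}H_r(A)$ then follows immediately. To prove this, let $M$ be any matrix with $\mathrm{rank}(M)\leq r$, so that $\sigma_{r+1}(M) = 0$. Applying Lemma~\ref{lem:weyl} with the decomposition $B = M + (B-M)$ and indices $i$ and $j = r+1$ gives, for every $i\geq 1$,
\[
\sigma_{i+r}(B) \leq \sigma_i(B-M) + \sigma_{r+1}(M) = \sigma_i(B-M)\,.
\]
Summing the squares over all $i\geq 1$ yields $\|B-M\|_F^2 = \sum_{i\geq 1}\sigma_i(B-M)^2 \geq \sum_{i\geq 1}\sigma_{i+r}(B)^2 = \sum_{j>r}\sigma_j(B)^2$, and this lower bound equals $\|B - H_r(B)\|_F^2$ since $H_r(B)$ zeroes out exactly the singular values indexed beyond $r$. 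Hence $H_r(B)$ attains the minimum, completing the argument.

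The only genuinely delicate point is the indexing in the Weyl step: one must choose $j = r+1$ (rather than, say, tracking which singular value of $M$ vanishes) so that the rank bound forces $\sigma_{r+1}(M)=0$, and then verify that the resulting inequality $\sigma_i(B-M)\geq \sigma_{i+r}(B)$ lines up termwise with the tail sum $\sum_{j>r}\sigma_j(B)^2$ that $H_r$ achieves with equality. Everything else is routine: the completed-square identity is a direct expansion, and the scaling commutation $H_r(\tfrac{1}{\lambda}A)=\tfrac{1}{\lambda}H_r(A)$ is immediate for $\lambda>0$. I would therefore expect the write-up to be short, with the Weyl-based proof of optimal rank-$r$ approximation being the substantive ingredient.
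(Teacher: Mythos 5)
Your proposal is correct and follows essentially the same route as the paper's proof: both reduce the maximization to a rank-constrained Frobenius-norm minimization (your completed square versus the paper's equivalent objective $\|A-\lambda M\|_F^2$), then apply Weyl's inequality (Lemma~\ref{lem:weyl}) with $j=r+1$ so the rank constraint kills $\sigma_{r+1}(M)$, and sum squared singular values to show the truncated SVD attains the lower bound $\sum_{i>r}\sigma_i^2$. The only difference is the cosmetic rescaling $B=\tfrac{1}{\lambda}A$, which changes nothing substantive.
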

\begin{proof}
Let $U\Sigma V^\top = \sum\limits_i \Sigma_{ii} U_i V_i^\top$ be a singular value decomposition of $A$.
We note that (\ref{eq:frobenius_rank_optimization}) is equivalent to
\begin{align}
\underset{\mathrm{rank}(M) \leq r}\min\, \|A - \lambda M\|_F^2 := f(M)
\label{eq:frobenius_rank_optimization2}
\end{align}
Now, note that $f(\frac{1}{\lambda} H_r(A)) = \|A - H_r(A)\|_F^2 = \sum\limits_{i=r+1}^k \Sigma_{ii}^2$.
On the other hand, by applying Weyl's inequality (Lemma~\ref{lem:weyl}) for $j=r+1$,
\[ f(M) = \|A - \lambda M\|_F^2 = \sum\limits_{i=1}^{k+r} \sigma_i^2(A - \lambda M) \geq \sum\limits_{i=1}^{k+r} (\sigma_{i+r}(A) - \sigma_{r+1}(\lambda M))^2
= \sum\limits_{i=r+1}^{k} \Sigma_{ii}^2\,,\]
where the last equality follows from the fact that $\mathrm{rank}(A) = k$ and $\mathrm{rank}(M) \leq r$.
Therefore, $M=\frac{1}{\lambda} H_r(A)$ minimizes (\ref{eq:frobenius_rank_optimization2}) and thus maximizes
(\ref{eq:frobenius_rank_optimization}).
\end{proof}

\subsection{Proof of Theorem~\ref{thm:greedy} (greedy)}
\label{proof_thm:greedy}
We will start with the following simple lemma about the Frobenius norm of a sum of matrices with orthogonal columns or rows:
\begin{lemma}
Let $U\in\mathbb{R}^{m\times r}, V\in\mathbb{R}^{n\times r}, X\in\mathbb{R}^{m\times r}, Y\in\mathbb{R}^{n\times r}$ be such that the columns of $U$ are orthogonal
to the columns of $X$ \emph{or} the columns of $V$ are orthogonal to the columns of $Y$.
Then $\|UV^\top + XY^\top \|_F^2 = \|UV^\top\|_F^2 + \|XY^\top\|_F^2$.
\label{lem:frobenius_sum}
\end{lemma}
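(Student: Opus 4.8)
The plan is to reduce the claimed identity to the vanishing of a single cross term. Expanding the Frobenius norm via the trace inner product $\langle A, B\rangle = \Tr(A^\top B)$, I would write
\[
\|UV^\top + XY^\top\|_F^2 = \|UV^\top\|_F^2 + \|XY^\top\|_F^2 + 2\langle UV^\top, XY^\top\rangle,
\]
so that the entire statement is equivalent to showing $\langle UV^\top, XY^\top\rangle = 0$ under either orthogonality hypothesis.

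Next I would rewrite the cross term as a trace and isolate the factor governed by the hypothesis. Since $(UV^\top)^\top = V U^\top$, we have $\langle UV^\top, XY^\top\rangle = \Tr(V U^\top X Y^\top)$. The key observation is that the stated orthogonality conditions translate directly into matrix identities: ``the columns of $U$ are orthogonal to the columns of $X$'' means exactly $U^\top X = \zerov$ (each entry of $U^\top X$ is an inner product of a column of $U$ with a column of $X$), and similarly ``the columns of $V$ are orthogonal to the columns of $Y$'' means $V^\top Y = \zerov$, equivalently $Y^\top V = \zerov$.

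I would then dispatch the two cases. In the first case, $U^\top X = \zerov$ makes the middle factor vanish, so $\Tr(V U^\top X Y^\top) = \Tr(V\,\zerov\,Y^\top) = 0$. In the second case I would invoke the cyclic property of the trace to move $Y^\top$ to the front, $\Tr(V U^\top X Y^\top) = \Tr(Y^\top V U^\top X)$, and then use $Y^\top V = \zerov$ to conclude the term is $0$. Either way the cross term vanishes and the identity follows.

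There is no substantial obstacle here; the result is essentially a bookkeeping exercise once the orthogonality of columns is recognized as the matrix condition $U^\top X = \zerov$ or $V^\top Y = \zerov$. The only point requiring a moment of care is the second case, where the relevant zero factor is not adjacent in the trace expression and one must commute it into position using cyclicity of $\Tr(\cdot)$ rather than a naive factorization.
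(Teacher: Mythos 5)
Your proof is correct and follows essentially the same route as the paper's: expand the Frobenius norm to isolate the cross term $\langle UV^\top, XY^\top\rangle = \Tr(VU^\top XY^\top)$, translate the column-orthogonality hypotheses into $U^\top X = \zerov$ or $Y^\top V = \zerov$, and use cyclicity of the trace to make the zero factor kill the term. The only cosmetic difference is that the paper writes the single rearrangement $\Tr(VU^\top XY^\top) = \Tr(U^\top X Y^\top V)$, in which both potential zero factors appear contiguously, so no case split is needed.
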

\begin{proof}
If the columns of $U$ are orthogonal to those of $X$, then $U^\top X = \zerov$
and if the columns of $V$ are orthogonal to those of $Y$, then $Y^\top V = \zerov$.
Therefore in any case $\langle UV^\top, XY^\top\rangle = \mathrm{Tr}(VU^\top XY^\top) = \mathrm{Tr}(U^\top X Y^\top V) = \zerov$,
implying
\begin{align*}
\|UV^\top + XY^\top\|_F^2 
= \|UV^\top\|_F^2 + \|XY^\top\|_F^2 + 2\langle UV^\top, XY^\top \rangle
= \|UV^\top\|_F^2 + \|XY^\top\|_F^2
\end{align*}
\end{proof}
Additionally, we have the following lemma regarding the optimality conditions of (\ref{eq:inner_optimization}):
\begin{lemma}
Let $A = U X V^\top$ where $U\in\mathbb{R}^{m\times r}$, $X\in\mathbb{R}^{r\times r}$, and $V\in\mathbb{R}^{n\times r}$,
such that $X$ is the optimal solution to (\ref{eq:inner_optimization}).
Then for any $u\in\mathrm{im}(U)$ and $v\in\mathrm{im}(V)$ we have that
$\langle \nabla R(A), uv^\top\rangle = 0$.
\label{lem:gradient_zero}
\end{lemma}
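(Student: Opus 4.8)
The plan is to read off the conclusion from the first-order optimality condition of the unconstrained inner problem (\ref{eq:inner_optimization}). Since $R$ is convex and the map $X \mapsto UXV^\top$ is linear, the composed objective $g(X) := R(UXV^\top)$ is convex in $X$, so the hypothesis that $X$ is optimal for (\ref{eq:inner_optimization}) is equivalent to the stationarity condition $\nabla_X g(X) = \zerov$. My first step is to translate this condition into a statement about $\nabla R(A)$.

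First I would compute $\nabla_X g(X)$ via the chain rule. For an arbitrary perturbation direction $\Delta \in \mathbb{R}^{r\times r}$, the directional derivative of $g$ is $\langle \nabla R(UXV^\top), U\Delta V^\top\rangle = \langle U^\top \nabla R(A) V, \Delta\rangle$, where I have used the adjoint identity $\langle M, U\Delta V^\top\rangle = \langle U^\top M V, \Delta\rangle$ for the inner product. Since this holds for every $\Delta$, we get $\nabla_X g(X) = U^\top \nabla R(A) V$, and optimality of $X$ forces
\[ U^\top \nabla R(A) V = \zerov \,, \]
the $r\times r$ zero matrix.

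Next I would translate this into the desired scalar identity. Since $u\in\mathrm{im}(U)$ and $v\in\mathrm{im}(V)$, there exist coefficient vectors $a,b\in\mathbb{R}^r$ with $u = Ua$ and $v = Vb$. Expanding the Frobenius inner product entrywise gives $\langle \nabla R(A), uv^\top\rangle = \sum_{ij} (\nabla R(A))_{ij}\, u_i v_j = u^\top \nabla R(A)\, v$. Substituting $u = Ua$, $v = Vb$ then yields
\[ \langle \nabla R(A), uv^\top\rangle = a^\top \left(U^\top \nabla R(A) V\right) b = a^\top \zerov\, b = 0 \,, \]
which is exactly the claim. I expect no genuine obstacle here; the only points requiring care are the matrix-calculus computation of $\nabla_X g$ through the adjoint identity and the elementary rewriting $\langle \nabla R(A), uv^\top\rangle = u^\top \nabla R(A) v$, after which the result is immediate from the stationarity condition.
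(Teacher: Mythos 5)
Your proof is correct and follows essentially the same route as the paper: both derive the stationarity condition $U^\top \nabla R(A) V = \zerov$ from optimality of $X$ in (\ref{eq:inner_optimization}) and then write $u = Ua$, $v = Vb$ to conclude $\langle \nabla R(A), uv^\top\rangle = a^\top U^\top \nabla R(A) V\, b = 0$. The only difference is that you spell out the chain-rule computation of the gradient, which the paper leaves implicit.
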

\begin{proof}
By the optimality condition of \ref{eq:inner_optimization}, we have that
\[ U^\top \nabla R(A) V = \zerov\]
Now, for any $u = Ux$ and $v = Vy$ we have
\[ \langle \nabla R(A), uv^\top\rangle = u^\top \nabla R(A) v = x^\top U^\top \nabla R(A) V y = 0 \]
\end{proof}
We are now ready for the proof of Theorem~\ref{thm:greedy}.
\begin{proof}
Let $A_{t-1}$ be the current solution $UV^\top$ before iteration $t-1\ge 0$. Let $u\in \R^m$ and $v\in \R^m$ be left and right singular vectors of matrix $\nabla R(A)$, i.e. unit vectors maximizing
$|\langle \nabla R(A), uv^\top\rangle|$.
Let 
$${\cal B}_t=\{B| B=A_{t-1}+\eta u v^T, \eta\in \R\}.$$
By smoothness we have
\begin{align*}
R(A_{t-1}) - R(A_t) 
& \geq \max_{B\in {\cal B}_t} \left\{R(A_{t-1}) - R(B) \right\}\\
& \geq \max_{B\in {\cal B}_t} \left\{ - \langle \nabla R(A_{t-1}), B - A_{t-1}\rangle - \frac{\rho_1^+}{2} \|B - A_{t-1}\|_F^2 \right\}\\
& \geq \max_{ \eta} \left\{\eta \langle \nabla R(A_{t-1}), uv^\top \rangle - \eta^2 \frac{\rho_1^+}{2}\right\} \\
& = \max_{\eta} \left\{\eta \|\nabla R(A_{t-1})\|_{2} - \eta^2 \frac{\rho_1^+}{2} \right\} \\
& = \frac{\|\nabla R(A_{t-1})\|_{2}^2 }{ 2 \rho_1^+ }
\end{align*}
where $\|\cdot\|_{2}$ is the spectral norm (i.e. maximum magnitude of a singular value).

On the other hand, by strong convexity and noting that 
\[ \mathrm{rank}(A^* - A_{t-1}) \leq \mathrm{rank}(A^*) + \mathrm{rank}(A_{t-1}) \leq r^* + r\,, \]
\begin{equation}
\begin{aligned}
R(A^*) - R(A_{t-1}) 
& \geq \langle \nabla R(A_{t-1}), A^* - A_{t-1}\rangle + \frac{\rho_{r+r^*}^-}{2} \|A^* - A_{t-1}\|_F^2\,. \\
\label{eq:greedy_strconv}
\end{aligned}
\end{equation}
Let $A_{t-1} = UV^\top$ and $A^* = U^*V^{*\top}$.
We let $\Pi_{\mathrm{im}(U)} = U(U^\top U)^+U^\top$ 
and
$\Pi_{\mathrm{im}(V)} = V(V^\top V)^+V^\top$ 
denote the orthogonal projections onto the images of $U$ and $V$ respectively.
We now
write 
\[ A^* = U^*V^{*\top} = (U^1 + U^2)(V^1 + V^2)^\top = U^1 V^{1\top} + U^1 V^{2\top} + U^2 V^{*\top} \]
where $U^1 = \Pi_{\mathrm{im}(U)} U^*$ is a matrix where every column of $U^*$ is replaced by its projection on $\mathrm{im}(U)$
and $U^2 = U^* - U^1$ and similarly $V^1 = \Pi_{\mathrm{im}(V)} V^*$ is a matrix where every column of $V^*$ is replaced by its projection on $\mathrm{im}(V)$
and $V^2 = V^* - V^1$. By setting $U' = (-U\ |\ U^1)$ and $V' = (V\ |\ V^1)$ we can write
\[ A^* - A_{t-1} = U' V{'^\top} + U^1 V^{2\top} + U^2 V^{*\top} \]
where $\mathrm{im}(U') = \mathrm{im}(U)$ and $\mathrm{im}(V') = \mathrm{im}(V)$.
Also, note that 
\begin{align*}
\mathrm{rank}(U^1V^{2\top}) 
 \leq \mathrm{rank}(V^2) 
\leq \mathrm{rank}(V^*)  
= \mathrm{rank}(A^*)  
\leq r^* 
\end{align*}
and similarly $\mathrm{rank}(U^2 V^{*\top}) \leq r^*$.
So now the right hand side of (\ref{eq:greedy_strconv}) can be reshaped as
\begin{align*}
&
\langle \nabla R(A_{t-1}), A^* - A_{t-1}\rangle + \frac{\rho_{r+r^*}^-}{2} \|A^* - A_{t-1}\|_F^2 \\
&
= \langle \nabla R(A_{t-1}), U' V{'^\top} + U^1 V^{2\top} + U^2 V^{*\top} 
\rangle + \frac{\rho_{r+r^*}^-}{2} \|U' V{'^\top} + U^1 V^{2\top} + U^2 V^{*\top} \|_F^2 
\end{align*}
Now, note that since by definition the columns of $U'$ are in $\mathrm{im}(U)$ and the columns of $V'$ are in $\mathrm{im}(V)$, Lemma~\ref{lem:gradient_zero}
implies that $\langle \nabla R(A_{t-1}), U' V{'^\top}\rangle = 0$. Therefore the above is equal to
\begin{align*}
& \langle \nabla R(A_{t-1}), U^1 V^{2\top} + U^2 V^{*\top} 
\rangle + \frac{\rho_{r+r^*}^-}{2} \|U' V{'^\top} + U^1 V^{2\top} + U^2 V^{*\top} \|_F^2 \\
& \geq \langle \nabla R(A_{t-1}), U^1 V^{2\top}\rangle + \langle \nabla R(A_{t-1}), U^2 V^{*\top} 
\rangle + \frac{\rho_{r+r^*}^-}{2} \left(\|U^1 V^{2\top}\|_F^2 + \|U^2 V^{*\top} \|_F^2\right) \\
& \geq 2 \underset{\mathrm{rank}(M) \leq r^*}\min\, \left\{\langle \nabla R(A_{t-1}), M\rangle + 
\frac{\rho_{r+r^*}^-}{2} \|M\|_F^2\right\}\\
& = -2 \frac{\|H_{r^*}(\nabla R(A_{t-1}))\|_{F}^2}{2\rho_{r+r^*}^-}\\
& \geq -r^*\frac{\|\nabla R(A_{t-1})\|_{2}^2}{\rho_{r+r^*}^-}
\end{align*}
where the 
first equality follows by noticing that the columns of $V'$ and $V^1$ are orthogonal to those of $V^2$ and the columns
of $U'$ and $U^1$ are orthogonal to those of $U^2$, and applying Lemma~\ref{lem:frobenius_sum}.
The last equality is a direct application of Lemma~\ref{lem:frobenius_rank_optimization} and the
last inequality states that the largest squared singular value is not smaller than the average of
the top $r^*$ squared singular values.
Therefore we have concluded that
\[ \|\nabla R(A_{t-1})\|_{2}^2\geq \frac{\rho_{r+r^*}^-}{r^*}\left(R(A_{t-1}) - R(A^*)\right) \]
Plugging this back into the smoothness inequality, we get
\[ R(A_{t-1}) - R(A_t) \geq \frac{1}{2 r^* \kappa} (R(A_{t-1}) - R(A^*)) \]
or equivalently
\[ R(A_{t}) - R(A^*) \leq \left(1 - \frac{1}{2 r^* \kappa}\right) (R(A_{t-1}) - R(A^*))\,. \]
Therefore after $L=2 r^* \kappa \log \frac{R(A_0) - R(A^*)}{\eps}$ iterations we have 
\begin{align*}
R(A_{T}) - R(A^*) 
& \leq \left(1 - \frac{1}{2 r^* \kappa}\right)^L (R(A_{0}) - R(A^*))\\
& \leq e^{-\frac{L}{2 r^* \kappa}} (R(A_{0}) - R(A^*))\\
& \leq \eps
\end{align*}
Since $A_0 = \zerov$, the result follows.
\end{proof}

\subsection{Proof of Theorem~\ref{thm:local_search} (local search)}
\label{proof_thm:local_search}
\begin{proof}
Similarly to Section~\ref{proof_thm:local_search}, we let $A_{t-1}$ be the current solution 
before iteration $t-1\ge 0$.  Let $u\in \R^m$ and $v\in \R^m$ be left and right singular vectors of matrix $\nabla R(A)$, i.e. unit vectors maximizing
$|\langle \nabla R(A), uv^\top\rangle|$ and
let 
$${\cal B}_t=\{B| B=A_{t-1}+\eta u v^T - \sigma_{\min} xy^\top, \eta\in \R \},$$
where $\sigma_{\min} xy^\top = A_{t-1} - H_{r-1}(A_{t-1})$ is the rank-1 term corresponding to the minimum singular value of $A_{t-1}$.
By smoothness we have
\begin{align*}
& R(A_{t-1}) - R(A_t) \\
& \geq \max_{B\in {\cal B}_t} \left\{R(A_{t-1}) - R(B) \right\}\\
& \geq \max_{B\in {\cal B}_t} \left\{ - \langle \nabla R(A_{t-1}), B - A_{t-1}\rangle - \frac{\rho_2^+}{2} \|B - A_{t-1}\|_F^2 \right\}\\
& = \max_{ \eta\in\mathbb{R}}\left\{ -\langle \nabla R(A_{t-1}), \eta uv^\top - \sigma_{\min} xy^\top \rangle  - \frac{\rho_2^+}{2} \|\eta uv^\top - \sigma_{\min} xy^\top \|_F^2  \right\} \\
& \geq \max_{\eta\in\mathbb{R}}\left\{ -\langle \nabla R(A_{t-1}), \eta uv^\top \rangle - \eta^2 \rho_2^+ - \sigma_{\min}^2 \rho_2^+ \right\} \\
& = \max_{\eta\in \R} \left\{ \eta \|\nabla R(A_{t-1})\|_{2} - \eta^2 \rho_2^+  - \sigma_{\min}^2 \rho_2^+ \right\}\\
& = \frac{\|\nabla R(A_{t-1})\|_{2}^2 }{ 4 \rho_2^+ } - \sigma_{\min}^2 \rho_2^+\,,
\end{align*}
where in the last inequality we 
used the fact that $\langle \nabla R(A_{t-1}), xy^\top \rangle = 0$ following from Lemma~\ref{lem:gradient_zero},
as well as Lemma~\ref{lem:frobenius_sum2}.

On the other hand, by strong convexity,
\begin{align*}
R(A^*) - R(A_{t-1}) 
& \geq \langle \nabla R(A_{t-1}), A^* - A_{t-1}\rangle + \frac{\rho_{r+r^*}^-}{2} \|A^* - A_{t-1}\|_F^2\,. 
\end{align*}
Let $A_{t-1} = UV^\top$ and $A^* = U^*V^{*\top}$. %
We write 
\[ A^* = U^*V^{*\top} = (U^1 + U^2)(V^1 + V^2)^\top = U^1 V^{1\top} + U^1 V^{2\top} + U^2 V^{*\top} \]
where $U^1$ is a matrix where every column of  $U^*$ is replaced by its projection on $\mathrm{im}(U)$
and $U^2 = U^* - U^1$ and similarly $V^1$ is a matrix where every column of  $V^*$ is replaced by its projection on $\mathrm{im}(V)$
and $V^2 = V^* - V^1$. By setting $U' = (-U\ |\ U^1)$ and $V' = (V\ |\ V^1)$ we can write
\[ A^* - A_{t-1} = U' V{'^\top} + U^1 V^{2\top} + U^2 V^{*\top} \]
where $\mathrm{im}(U') = \mathrm{im}(U)$ and $\mathrm{im}(V') = \mathrm{im}(V)$.
Also, note that 
\begin{align*}
\mathrm{rank}(U^1V^{2\top}) 
 \leq \mathrm{rank}(V^2) 
\leq \mathrm{rank}(V^*)  
= \mathrm{rank}(A^*)  
\leq r^* 
\end{align*}
and similarly $\mathrm{rank}(U^2 V^{*\top}) \leq r^*$.
So we now have
\begin{align*}
&
\langle \nabla R(A_{t-1}), A^* - A_{t-1}\rangle + \frac{\rho_{r+r^*}^-}{2} \|A^* - A_{t-1}\|_F^2 \\
&
= \langle \nabla R(A_{t-1}), U' V{'^\top} + U^1 V^{2\top} + U^2 V^{*\top} 
\rangle + \frac{\rho_{r+r^*}^-}{2} \|U' V{'^\top} + U^1 V^{2\top} + U^2 V^{*\top} \|_F^2 \\
& = \langle \nabla R(A_{t-1}), U^1 V^{2\top} + U^2 V^{*\top} 
\rangle + \frac{\rho_{r+r^*}^-}{2} \|U' V{'^\top} + U^1 V^{2\top} + U^2 V^{*\top} \|_F^2 \\
& = \langle \nabla R(A_{t-1}), U^1 V^{2\top} + U^2 V^{*\top} 
\rangle + \frac{\rho_{r+r^*}^-}{2} \left(\|U' V^{'\top}\|_F^2 + \|U^1 V^{2\top}\|_F^2 + \|U^2 V^{*\top} \|_F^2\right) \\
& \geq \langle \nabla R(A_{t-1}), U^1 V^{2\top}\rangle + \langle \nabla R(A_{t-1}), U^2 V^{*\top} 
\rangle + \frac{\rho_{r+r^*}^-}{2} \left(\|U^1 V^{2\top}\|_F^2 + \|U^2 V^{*\top} \|_F^2\right) \\
& + \frac{\rho_{r+r^*}^-}{2} \|U' V^{'\top}\|_F^2\\
& \geq 2 \underset{\mathrm{rank}(M) \leq r^*}\min \left\{ \langle \nabla R(A_{t-1}), M\rangle + 
\frac{\rho_{r+r^*}^-}{2} \|M\|_F^2 \right\}
	+ \frac{\rho_{r+r^*}^-}{2} \|U' V^{'\top}\|_F^2\\
& = -2 \frac{\|H_{r^*}(\nabla R(A_{t-1}))\|_{F}^2}{2\rho_{r+r^*}^-}
	+ \frac{\rho_{r+r^*}^-}{2} \|U' V^{'\top}\|_F^2\\
& \geq -r^*\frac{\|\nabla R(A_{t-1})\|_{2}^2}{\rho_{r+r^*}^-}
	+ \frac{\rho_{r+r^*}^-}{2} \|U' V^{'\top}\|_F^2
\end{align*}
where the second equality follows from the fact that $\langle \nabla R(A_{t-1}), uv^\top\rangle = 0$ for any $u\in \mathrm{im}(U), v\in \mathrm{im}(V)$,
the third equality from the fact that $\mathrm{im}(U^2)\perp \mathrm{im}(U') \cup \mathrm{im}(U^1)$
and $\mathrm{im}(V^2)\perp \mathrm{im}(V')$ and by applying
Lemma~\ref{lem:frobenius_sum}, and the last inequality from the fact that the largest squared singular value is not smaller than the average of
the top $r^*$ squared singular values.
Now, note that since $\mathrm{rank}(U^1 V^{1\top}) \leq r^* < r = \mathrm{rank}(UV^\top)$, 
\begin{align*}
\|U'V{'^\top}\|_F^2 & = \|U^1 V^{1\top} - UV^\top\|_F^2 \\
& = \sum\limits_{i=1}^r \sigma_i^2 (U^1 V^{1\top} -  UV^\top) \\
& \geq \sum\limits_{i=1}^r (\sigma_{i+r^*}(UV^\top) -\sigma_{r^*+1} (U^1 V^{1\top}))^2 \\
& = \sum\limits_{i=r^*+1}^r \sigma_i^2 (UV^\top) \\
& \geq (r-r^*) \sigma_{\min}^2 (UV^\top) \\
& = (r-r^*) \sigma_{\min}^2(A_{t-1})\,, 
\end{align*}
where we used the fact that $\mathrm{rank}(U^1V^{1\top}) \leq r^*$ together with Lemma~\ref{lem:weyl}.
Therefore we have concluded that
\[ \|\nabla R(A_{t-1})\|_{2}^2\geq \frac{\rho_{r+r^*}^-}{r^*} \left(R(A_{t-1}) - R(A^*)\right) + \frac{(\rho_{r+r^*}^-)^2 (r-r^*)}{2r^*} \sigma_{\min}^2\]
Plugging this back into the smoothness inequality and setting $\widetilde{\kappa} = \frac{\rho_2^+}{\rho_{r+r^*}^-}$, we get
\begin{align*}
R(A_{t-1}) - R(A_t) 
& \geq \frac{1}{4r^*\widetilde{\kappa}} (R(A_{t-1}) - R(A^*)) + \left(\frac{\rho_{r+r^*}^-(r-r^*)}{8r^*\widetilde{\kappa}} - \rho_2^+ \right)\sigma_{\min}^2(A_{t-1})\\
& \geq \frac{1}{4r^*\widetilde{\kappa}} (R(A_{t-1}) - R(A^*))
\end{align*}
as long as $r \geq r^*(1 + 8\widetilde{\kappa}^2)$,
or equivalently,
\[ R(A_{t}) - R(A^*) \leq \left(1 - \frac{1}{4 r^* \widetilde{\kappa}}\right) (R(A_{t-1}) - R(A^*))\,. \]
Therefore after $L=4 r^* \widetilde{\kappa} \log \frac{R(A_0) - R(A^*)}{\eps}$ iterations we have 
\begin{align*}
R(A_{T}) - R(A^*) 
& \leq \left(1 - \frac{1}{4 r^* \widetilde{\kappa}}\right)^L (R(A_{0}) - R(A^*))\\
& \leq e^{-\frac{L}{4 r^* \widetilde{\kappa}}} (R(A_{0}) - R(A^*))\\
& \leq \eps
\end{align*}
Since $A_0 = \zerov$ and $\widetilde{\kappa} \leq \kappa_{r+r^*}$, the result follows.
\end{proof}

\subsection{Tightness of the analysis}
\label{sec:appendix_tightness}

It is important to note that the $\kappa_{r+r^*}$ factor that appears in the rank bounds of both Theorems~\ref{thm:greedy} 
and \ref{thm:local_search} is inherent in these algorithms and not an artifact of our analysis. 
In particular, such lower bounds based on the restricted condition number have been previously shown for the problem
of sparse linear regression. More specifically, \cite{FKT15} showed
that there is a family of instances in which the analogues of Greedy and Local Search for sparse optimization require the sparsity 
to be $\Omega(s^* \kappa')$ for constant error $\epsilon > 0$, where $s^*$ is the optimal sparsity and $\kappa'$ is the \emph{sparsity}-restricted condition number. 
These instances can be easily adjusted
to give a \emph{rank} lower bound of $\Omega(r^* \kappa_{r+r^*})$ for constant error $\epsilon > 0$, implying that the $\kappa$ dependence in Theorem~\ref{thm:greedy}
is tight for Greedy. Furthermore, specifically for Local Search, \cite{axiotis2020sparse} additionally showed that there is a family of instances
in which the analogue of Local Search for sparse optimization requires a sparsity of $\Omega(s^* (\kappa')^2)$. Adapting these instances to the setting of rank-constrained convex optimization is less trivial, but we conjecture that it is possible, which would lead to a rank lower bound of $\Omega(r^* \kappa_{r+r^*}^2)$ for Local Search.

We present the following lemma, which essentially states that sparse optimization lower bounds for Orthogonal Matching Pursuit (OMP, \cite{pati1993orthogonal}) (resp.
Orthogonal Matching Pursuit with Replacement (OMPR, \cite{JTD11}))
in which the optimal sparse solution is also a global optimum, immediately carry over (up to constants) to rank-constrained convex optimization lower bounds for Greedy (resp. Local Search).

\begin{lemma}
Let $f\in\mathbb{R}^n\rightarrow\mathbb{R}$ and $x^*\in\mathbb{R}^n$ be an $s^*$-sparse vector that is also a global minimizer of $f$. Also,
let $f$ have restricted smoothness parameter $\beta$ at sparsity level $s+s^*$ for some $s \geq s^*$ and restricted
strong convexity parameter $\alpha$ at sparsity level $s+s^*$.
Then we can define the rank-constrained problem, with $R:\mathbb{R}^{n\times n}\rightarrow \mathbb{R}$,
\begin{align}
\underset{\mathrm{rank}(A)\leq s^*} \min\, R(A) := f(\mathrm{diag}(A)) + \frac{\beta}{2} \|A - \mathrm{diag}(A)\|_F^2\,,
\label{rcopt}
\end{align}
where $\mathrm{diag}(A)$ is a vector containing the diagonal of $A$.
$R$ has rank-restricted smoothness at rank $s+s^*$ at most $2\beta$ and rank-restricted strong convexity at rank $s+s^*$ at least $\alpha$.
Suppose that we run $t$ iterations of OMP (resp. OMPR) starting from a solution $x$, to get solution $x'$, and similarly
run $t$ iterations of Greedy (resp. Local Search) starting from solution $A = \mathrm{diag}(x)$ (where $\mathrm{diag}(x)$ is a diagonal matrix with $x$ on the diagonal) to get solution $A'$.
Then $A'$ is diagonal and $\mathrm{diag}(A') = x'$. In other words, in this scenario OMP and Greedy (resp. OMPR and Local Search) are equivalent.
\end{lemma}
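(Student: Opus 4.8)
The plan is to treat the two halves of the lemma separately: first the rank-restricted smoothness and strong convexity bounds on $R$, then the iterate-by-iterate equivalence of the two pairs of algorithms. I would begin by computing the gradient of $R$ from (\ref{rcopt}). Writing $\mathrm{off}(A) = A - \mathrm{diag}(A)$ for the off-diagonal part of $A$ (so that $\|A\|_F^2 = \|\mathrm{diag}(A)\|_2^2 + \|\mathrm{off}(A)\|_F^2$), the diagonal and off-diagonal entries decouple, giving $\nabla R(A) = \mathrm{diag}(\nabla f(\mathrm{diag}(A))) + \beta\,\mathrm{off}(A)$ (with $\mathrm{diag}(\cdot)$ overloaded, as in the lemma, to send a vector to the corresponding diagonal matrix), where the first term is supported on the diagonal and the second off it. Writing $D_g(y,x) := g(y) - g(x) - \langle \nabla g(x), y - x\rangle$ for a Bregman divergence, this decoupling propagates to
\[ D_R(B,A) = D_f(\mathrm{diag}(B), \mathrm{diag}(A)) + \frac{\beta}{2}\|\mathrm{off}(B-A)\|_F^2 , \]
since the off-diagonal penalty is an isotropic quadratic with curvature $\beta$ on the off-diagonal coordinates.

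With this decomposition the parameter bounds reduce to two observations. For the off-diagonal quadratic, $\alpha \le \beta \le 2\beta$ immediately sandwiches $\frac{\beta}{2}\|\mathrm{off}(B-A)\|_F^2$ between its $\alpha$- and $2\beta$-scaled versions. For the diagonal term, \emph{provided} the diagonal difference $\mathrm{diag}(B-A)$ is $(s+s^*)$-sparse, the restricted smoothness and strong convexity of $f$ give $\frac{\alpha}{2}\|\mathrm{diag}(B-A)\|_2^2 \le D_f(\mathrm{diag}(B),\mathrm{diag}(A)) \le \frac{\beta}{2}\|\mathrm{diag}(B-A)\|_2^2$. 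Adding the two contributions and using the orthogonality of the diagonal and off-diagonal parts yields $\frac{\alpha}{2}\|B-A\|_F^2 \le D_R(B,A) \le \frac{\beta}{2}\|B-A\|_F^2$, hence $\rho_{s+s^*}^- \ge \alpha$ and $\rho_{s+s^*}^+ \le \beta \le 2\beta$. I expect the diagonal-sparsity requirement to be the main obstacle: $\mathrm{rank}(B-A) \le s+s^*$ does \emph{not} by itself force $\|\mathrm{diag}(B-A)\|_0 \le s+s^*$ (for instance $\onev\onev^\top$ has rank $1$ but a fully dense diagonal). I would resolve this by restricting to the directions that the lower-bound transfer actually needs, namely differences between a diagonal $s$-sparse iterate, the $s^*$-sparse optimum $\mathrm{diag}(x^*)$, and rank-one updates of the form $\onev_i \onev_i^\top$, all of which have diagonal support of size at most $s+s^*$.

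For the equivalence I would induct on the iteration count, with hypothesis that the current iterate equals $\mathrm{diag}(x)$ for the corresponding OMP (resp.\ OMPR) iterate $x$. The crucial point is that for a diagonal $A = \mathrm{diag}(x)$ we have $\mathrm{off}(A) = \zerov$, so $\nabla R(A) = \mathrm{diag}(\nabla f(x))$ is itself diagonal; its singular values are $\{|\nabla f(x)_i|\}$ and its top singular pair is $(\onev_{i^*},\onev_{i^*})$ with $i^* = \argmax_i |\nabla f(x)_i|$ --- exactly the coordinate selected by OMP. After appending $\onev_{i^*}$ to $U$ and $V$, the columns of $U$ and $V$ are standard basis vectors indexed by $S' = \mathrm{supp}(x)\cup\{i^*\}$, so every $UXV^\top$ is supported on $S'\times S'$. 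Because the off-diagonal penalty is uniquely minimized at $\zerov$ and does not affect $f(\mathrm{diag}(\cdot))$, the minimizer of the \textsc{Optimize} problem (\ref{eq:inner_optimization}) is diagonal and its diagonal is $\argmin_{\mathrm{supp}\subseteq S'} f$, which is precisely OMP's fully corrective step; moreover Lemma~\ref{lem:gradient_zero} specialized to the diagonal gives $\nabla f(x)_i = 0$ for $i\in\mathrm{supp}(x)$ at every optimized iterate, matching OMP's invariant and ensuring the supports evolve identically. Hence the next iterate is again diagonal and equals $\mathrm{diag}$ of the next OMP iterate.

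The OMPR/Local Search case is identical except for the \textsc{Truncate} step, which applies $H_{r-1}$ to the diagonal matrix $\mathrm{diag}(x)$ and therefore deletes the smallest singular value, i.e.\ the coordinate $\argmin_{i\in\mathrm{supp}(x)}|x_i|$ --- exactly the coordinate OMPR evicts before re-optimizing. Composing truncation, insertion of $\onev_{i^*}$, and the fully corrective \textsc{Optimize} reproduces one swap-and-refit step of OMPR, closing the induction. Overall the equivalence is a direct unwinding of the definitions once the diagonal structure is observed, and the only genuinely delicate part is the parameter transfer flagged above.
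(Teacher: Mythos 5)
Your treatment of the algorithmic equivalence is correct and follows the same route as the paper's proof: for a diagonal iterate $A=\mathrm{diag}(x)$ the penalty term vanishes from the gradient, so $\nabla R(A)=\mathrm{diag}(\nabla f(x))$, its top singular pair is $(\onev_{i^*},\pm\onev_{i^*})$ with $i^*=\argmax_i|\nabla f(x)_i|$, and hence every inserted or removed rank-1 component is a multiple of some $\onev_i\onev_i^\top$. You are in fact more careful than the paper on the two steps it glosses over: that \textsc{Optimize} returns a diagonal matrix (any nonzero off-diagonal part of $UXV^\top$ strictly increases the penalty while leaving $f(\mathrm{diag}(\cdot))$ unchanged) and that \textsc{Truncate} on a diagonal iterate evicts $\argmin_{i\in\mathrm{supp}(x)}|x_i|$; you also write the correct gradient $\beta\,\mathrm{off}(A)$ where the paper has an immaterial factor-of-two slip. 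The one item of the paper's proof you drop is the observation that $R(\widehat A)\ge f(\mathrm{diag}(\widehat A))\ge f(x^*)$, with equality only for diagonal $\widehat A$, so that $\mathrm{diag}(x^*)$ is a global optimum of (\ref{rcopt}); this is where the hypothesis that $x^*$ globally (not just among sparse vectors) minimizes $f$ is used, and it is what makes the transferred instance meaningful.

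The gap is in the parameter bounds. The lemma asserts that $R$ itself has rank-restricted smoothness at most $2\beta$ and rank-restricted strong convexity at least $\alpha$ at rank $s+s^*$, i.e., inequalities holding for \emph{all} pairs with $\mathrm{rank}(B-A)\le s+s^*$. You correctly identify the obstruction --- a rank-one difference such as $\onev\onev^\top$ has a fully dense diagonal, about which the sparsity-restricted properties of $f$ say nothing --- but your proposed fix, restricting attention to the directions the algorithms actually visit, proves a strictly weaker statement than the one claimed, and the weaker statement defeats the lemma's purpose: the parameter claims are there precisely so that the constructed instance certifiably has rank-restricted condition number $O(\beta/\alpha)$, turning OMP/OMPR lower bounds into lower bounds in terms of $\kappa_{r+r^*}$; a bound that holds only along visited directions does not bound $\kappa_{r+r^*}$ at all. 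To be fair, the paper's own proof is entirely silent on this point (it proves only the equivalence), so you have flagged a real issue rather than created one. A genuine repair would have to use the off-diagonal penalty to compensate for dense diagonals, for instance via the fact that any rank-$r$ matrix $M$ satisfies $\|M\|_F^2\ge\|M\|_*^2/r\ge\|\mathrm{diag}(M)\|_1^2/r$, hence $\|M-\mathrm{diag}(M)\|_F^2\ge\|\mathrm{diag}(M)\|_1^2/r-\|\mathrm{diag}(M)\|_2^2$, so that a dense diagonal forces a large off-diagonal mass; neither your argument nor the paper's carries this out.
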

\begin{proof}
Note that for any solution $\widehat{A}$ of $R$ we have
$R(\widehat{A}) \geq f(\mathrm{diag}(\widehat{A})) \geq f(x^*)$, with equality only if $\widehat{A}$ is diagonal. Furthermore, 
$\mathrm{rank}(\mathrm{diag}(x^*)) \leq s^*$, meaning that $\mathrm{diag}(x^*)$ is an optimal solution of $(\ref{rcopt})$.
Now, given any diagonal solution $A$ of (\ref{rcopt}) such that $A = \mathrm{diag}(x)$, we claim that one step of either Greedy or Local Search keeps it diagonal.
This is because 
\begin{align*}
\nabla R(A) = \mathrm{diag}(\nabla f(x)) + \frac{\beta}{2} (A - \mathrm{diag}(A)) = \mathrm{diag}(\nabla f(x))
\,.
\end{align*} Therefore the largest
eigenvalue of $\nabla R(A)$ has corresponding eigenvector $\onev_i$ for some $i$, which implies that the rank-1 component which will be added is
a multiple of $\onev_i\onev_i^\top$. For the same reason the rank-1 component removed by Local Search will be a multiple of $\onev_j\onev_j^\top$ for some $j$.
Therefore running Greedy (resp. Local Search) on such an instance is identical to running OMP (resp. OMPR) on the diagonal.
\end{proof}

Together with the lower bound instances of \cite{FKT15} (in which the global minimum property is true), it immediately implies a rank lower bound of $\Omega(r^* \kappa_{r+r^*})$ for getting a solution with constant error for rank-constrained convex optimization.
On the other hand, the lower bound instances of \cite{axiotis2020sparse} give a \emph{quadratic} lower bound in $\kappa$ for OMPR. The above lemma cannot be directly applied
since the sparse solutions are not global minima, but we conjecture that a similar proof will give a rank lower bound of $\Omega(r^* \kappa_{r+r^*}^2)$ for rank-constrained convex optimization with 
Local Search.

\subsection{Addendum to Section~\ref{sec:ML}}
\begin{figure}[!h]
\centering
\begin{subfigure}{.8\textwidth}
  \centering
  \includegraphics[width=\linewidth]{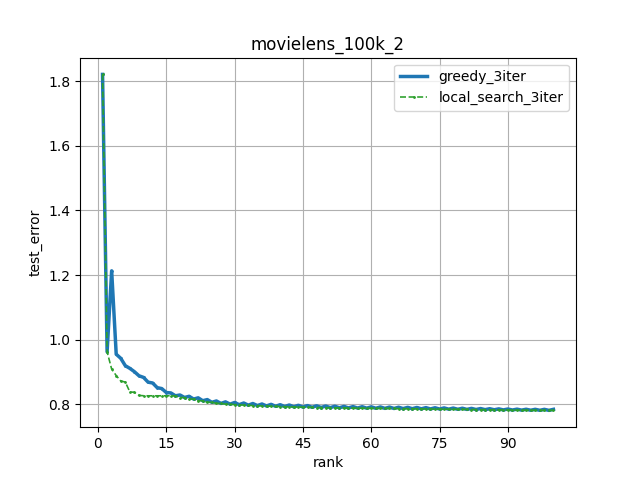}
  \label{fig:sub3}
\end{subfigure}
  \caption{One of the splits of the Movielens 100K dataset. We can see that for small ranks the Fast Local Search solution is better and more stable, but for larger ranks
  it does not provide any improvement over the Fast Greedy algorithm.}
\end{figure}
\begin{figure}
\begin{subfigure}{.5\textwidth}
  \centering
  \includegraphics[width=\linewidth]{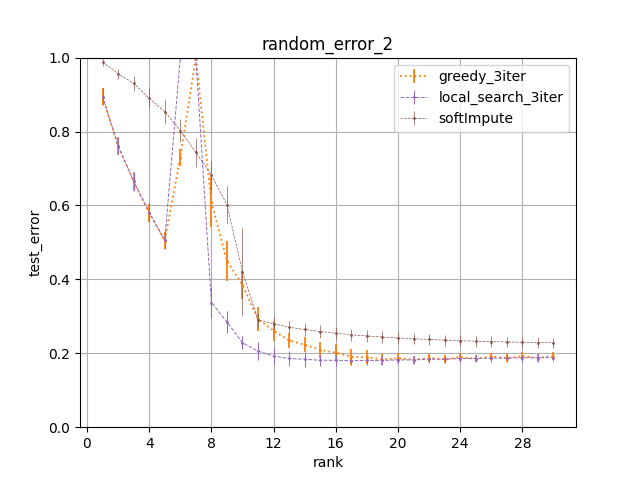}
  \caption{$k = 10$,
          $p = 0.5$,
          $SNR = 1$}
  \label{fig:sub4}
\end{subfigure}%
\begin{subfigure}{.5\textwidth}
  \centering
  \includegraphics[width=\linewidth]{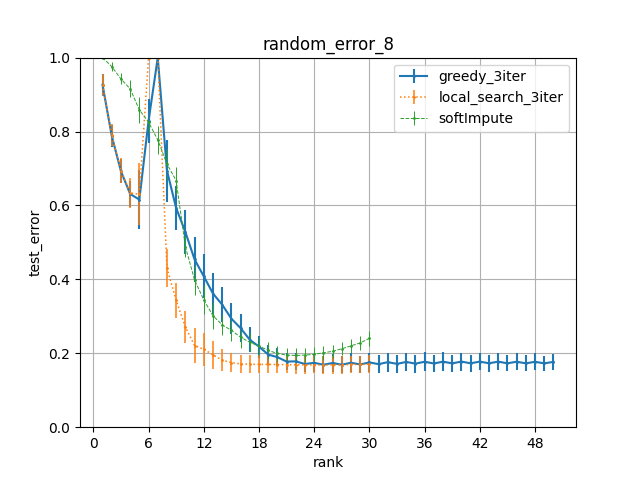}
  \caption{$k = 10$,
          $p = 0.3$,
          $SNR = 3$}
  \label{fig:sub5}
\end{subfigure}%
\label{fig:matrix_completion_test_2}
\caption{Test error vs rank in the matrix completion problem of
Section~\ref{sec:matrix_completion_test}. Bands of $\pm 1$ standard error are shown.}
\end{figure}
\begin{figure}
\begin{subfigure}{.5\textwidth}
  \centering
  \includegraphics[width=\linewidth]{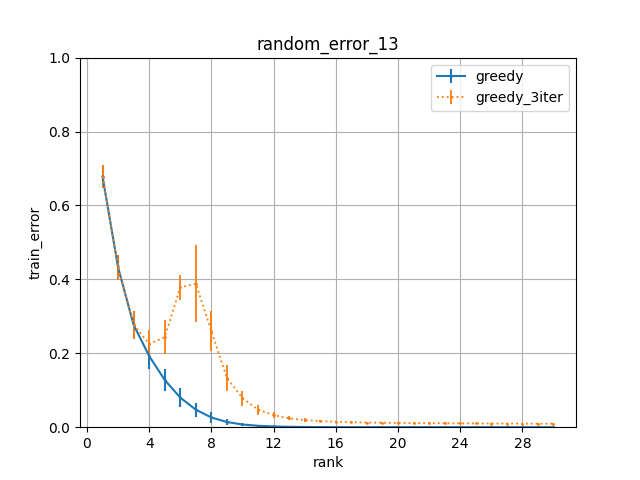}
  \label{fig:sub4}
  \caption{Train error vs rank}
\end{subfigure}%
\begin{subfigure}{.5\textwidth}
  \centering
  \includegraphics[width=\linewidth]{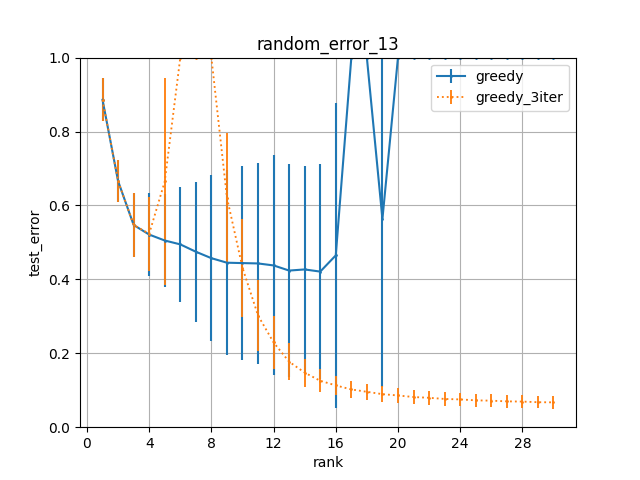}
  \label{fig:sub4}
  \caption{Test error vs rank}
\end{subfigure}%
\caption{
Performance of greedy with fully solving the inner optimization problem (left)
and applying 3 iterations of the LSQR algorithm (right)
in the matrix completion problem of
Section~\ref{sec:matrix_completion_test}. 
$k = 5$,
          $p = 0.2$,
          $SNR = 10$.
Bands of $\pm 1$ standard error are shown.
This experiment shows why it is crucial to apply some kind of regularization to the Fast Greedy and Fast Local Search algorithms for machine learning applications.}
\end{figure}

\end{document}